\documentclass{article} 
\usepackage{iclr2027_conference,times}

\usepackage{amsmath,amsfonts,bm}

\def\eqref#1{equation~\ref{#1}}

\def\1{\bm{1}}

\DeclareMathAlphabet{\mathsfit}{\encodingdefault}{\sfdefault}{m}{sl}
\SetMathAlphabet{\mathsfit}{bold}{\encodingdefault}{\sfdefault}{bx}{n}

\usepackage[most]{tcolorbox}
\usepackage{url}
\usepackage{amsthm}
\usepackage{algorithm}
\usepackage{algpseudocode}
\usepackage{booktabs}
\usepackage{array}
\usepackage{colortbl}
\usepackage{multirow}
\usepackage{graphicx}
\usepackage{float}
\usepackage{placeins}
\usepackage{titletoc}
\usepackage{hyperref}
\definecolor{avgcolumn}{RGB}{238,239,255}
\definecolor{evtarow}{RGB}{255,226,226}
\newcolumntype{N}{>{\centering\arraybackslash}m{1.25cm}}
\newcolumntype{A}{>{\columncolor{avgcolumn}\centering\arraybackslash}m{1.25cm}}
\newcolumntype{E}{>{\columncolor{evtarow}\centering\arraybackslash}m{1.25cm}}
\algrenewcommand\algorithmicrequire{\textbf{Input:}}
\algrenewcommand\algorithmicensure{\textbf{Output:}}

\newtheorem{theorem}{Theorem}

\title{Demonstration-Free Success-Probability Reward Learning for Generalist Robot Policies}

\author{
\textbf{Duo Wu}$^{\spadesuit}$, 
\textbf{Haifeng Wang}$^{\spadesuit,\heartsuit}$,
\textbf{Rongwei Lu}$^{\spadesuit}$,
\textbf{Jinghe Wang}$^{\spadesuit}$, 
\textbf{Tianyi Xiong}$^{\spadesuit}$,\\
\textbf{Zhimin Wang}$^{\spadesuit}$,
\textbf{Chao Yu}$^{\spadesuit}$, 
\textbf{Shuai Ma}$^{\diamondsuit}$,
\textbf{Zhi Wang}$^{\spadesuit}$ \\
\normalfont
$^{\spadesuit}$ Tsinghua University \quad
$^{\diamondsuit}$ Pengcheng Laboratory \quad
$^{\heartsuit}$ Yuanxing Robotics
}

\iclrfinalcopy
\begin{document}

\maketitle
\pagestyle{plain}
\vspace{-0.4cm}
\begin{abstract}

    Reinforcement learning (RL) enables generalist robot policies to improve through trial-and-error interaction, yet its effectiveness is fundamentally  constrained by sparse task rewards. Existing general-purpose reward models typically alleviate this issue by learning task progress from expert demonstrations, but introduce a distribution mismatch with the mixed-quality rollouts encountered during policy optimization, making their estimates unreliable on suboptimal and failed behaviors from which the policy must learn. In this work, we introduce a demonstration-free reward learning paradigm where dense reward feedback can be learned directly from sparse task outcomes and policy experience. We theoretically show  that terminal task outcomes implicitly define dense success-probability feedback at intermediate timesteps, which can be recursively learned through bootstrapping. Based on this insight, we introduce \textbf{\textit{eVTA$_0$}}, which learns success probabilities from mixed-quality policy rollouts through temporal-difference-style bootstrapping, without expert demonstrations or intermediate  annotations. We further introduce \textbf{\textit{RL with Evolving Rewards  (RLER)}}, a closed-loop framework that  adapts eVTA$_0$ using newly collected rollouts as the policy evolves. Experiments show that eVTA$_0$ provides more informative rewards than state-of-the-art reward models and achieves the best average policy performance across all LIBERO task suites under the same RL training budget, improving success rates by $5.4\%$--$13.8\%$ over the initial policy. In real-world manipulation, RLER further improves overall success rates by $20\%$--$26\%$, with $35\%$--$36\%$ gains under out-of-distribution conditions. These results demonstrate the effectiveness of demonstration-free reward learning and adapting rewards as the policy evolves. \textit{Project webpage: https://duowuyms.github.io/evta0}.

\end{abstract}

\section{Introduction}

Reinforcement learning (RL)~\citep{li2025gr,lu2025vla,li2025simplevla,luo2025precise} has recently shown great promise for enabling generalist robot policies, including vision-language-action models (VLAs)~\citep{intelligence2025pi06,kim2024openvla,li2024cogact,walke2023bridgedata} and world-action models (WAMs)~\citep{cen2025worldvla,chen2026lawam,wang2026openwam}, to acquire new capabilities beyond demonstrations through trial-and-error experience. However, effective RL training remains constrained by sparse task rewards. In robotic manipulation, rewards are often available only as terminal success or failure signals~\citep{mudrs,rengarajan2022reinforcement,luo2025precise,chen2025conrft,zhouefficient}, providing little guidance on which intermediate behaviors contribute to task completion. This problem becomes particularly severe in long-horizon tasks that require hundreds or even thousands of actions before completion~\citep{xu2025stare,sermanet2024robovqa}, making credit assignment and policy improvement difficult.


Recent studies have explored general-purpose reward models that estimate task progress from intermediate observations across diverse tasks and scenarios~\citep{zhai2025vision,shu2025rftf,liu2026fly,bai2025evolve,lee2026roboreward}, offering a promising approach to alleviating reward sparsity. Specifically, they follow a common paradigm of training reward models on expert demonstrations to estimate how much an action advances the task toward completion, under the assumption that task completion progresses monotonically over time~\citep{din2025vision,lee2026roboreward}. Nevertheless, this  paradigm introduces a distribution mismatch with the mixed-quality rollouts encountered during RL training, which naturally contain suboptimal and failed trajectories. As a result, their reward estimates can become unreliable on the suboptimal and failed   behaviors from which the  policy needs to learn. These limitations motivate a demonstration-free paradigm for reward learning.

The central finding of this paper is that dense rewards can be learned directly from mixed-quality policy rollouts without expert demonstrations, opening a new paradigm for reward learning. We first establish the theoretical foundation behind this finding by showing that sparse terminal outcomes implicitly define dense success-probability feedback at intermediate timesteps. Specifically, the intermediate reward can be formulated as the probability of eventual task success conditioned on the trajectory history. Such success probabilities can be recursively learned through bootstrapping, allowing terminal supervision to be propagated backward without requiring  intermediate  annotations.  
Building on this principle, we design \textbf{\textit{eVTA}$_0$}\footnote{\textit{eVTA} stands for \textit{embodied ventral tegmental area (VTA)}. Inspired by the role of the biological VTA in reward processing and learning, we aim to build an embodied VTA that generates informative reward signals for continual improvement of generalist robot policies. eVTA$_0$ represents the first step toward this vision.}, a reward model  that observes manipulation histories and estimates intermediate success probabilities. We develop a simple temporal-difference (TD) style bootstrapping recipe to train eVTA$_0$ using only terminal signals from mixed-quality rollouts. Furthermore, enabled by the demonstration-free learning ability of eVTA$_0$, we introduce \textbf{\textit{RL with Evolving Rewards  (RLER)}}, where newly collected policy rollouts are  used to update eVTA$_0$, and eVTA$_0$ in turn provides adapted rewards for subsequent policy optimization. In this way, reward learning can evolve with the changing behavior distribution of the policy.

Through extensive experiments, we show that eVTA$_0$  produces reward signals that more reliably track task evolution and distinguish successful from failed trajectories than state-of-the-art reward models. 
When integrated into RL training on LIBERO~\citep{fei2025libero}, eVTA$_0$ achieves the best average success rate across all  task suites under the same training budget, improving over the initial policy by $5.4\%$--$13.8\%$.
Moreover, in real-world manipulation, RLER progressively improves policy performance, increasing overall success rates by $20\%$--$26\%$  over the initial SFT policies. The performance gains are more evident under out-of-distribution (OOD) conditions, with improvements  of $35\%$--$36\%$ and  $15\%$--$20\%$ over the SFT policy and fixed eVTA$_0$ baseline, respectively. 

The contributions of this paper are summarized as follows:

\begin{itemize}
    \item We theoretically reveal that terminal task outcomes implicitly define intermediate reward signals, where rewards can be modeled as conditional task success probabilities and recursively learned through bootstrapping.

    \item Based on this principle, we establish  a demonstration-free reward learning paradigm by developing eVTA$_0$, a general-purpose reward model trained through a simple TD-style bootstrapping recipe to estimate success probabilities from manipulation histories.
    
    \item We further introduce RL with Evolving Rewards (RLER), a closed-loop learning framework that adapts eVTA$_0$ using newly collected policy rollouts, allowing reward feedback to evolve with the changing behavior distribution of the policy.

    \item We conduct extensive evaluations on both reward modeling and policy learning. 
    eVTA$_0$ consistently provides high-quality reward feedback and improves policy performance under the same RL training budget. Real-world experiments further show that RLER improves overall success rates by $20\%$--$26\%$  over the initial SFT policies, while achieving $35\%$--$36\%$ improvements under OOD conditions. 
\end{itemize}

\section{Related Work}
Sparse terminal rewards hinder effective RL for training generalist robot policies, motivating a broad range of reward densification methods that can be grouped into four categories.

\noindent\textbf{Handcrafted dense rewards.} 
A common strategy is to manually design dense rewards based on heuristic signals~\citep{li2025reflection,ye2025vla,pan2026sa,team2026gigabrain,wang2025vla}, such as sub-goal completion~\citep{zhang2025reinbot}, end-effector states~\citep{xu2025stare}, or object-gripper distances~\citep{liu2026towards}. 
Although effective in specific settings, these methods require substantial engineering effort and often fail to generalize beyond the target tasks or scenarios~\citep{cobbe2020leveraging,tan2025robo}.  

\noindent\textbf{Stage-based rewards.}
To reduce reward engineering overhead, another line of work exploits the stage structure of manipulation tasks to construct dense rewards~\citep{lu2025vla,mudrs,escoriza2025multi,chen2025sarm}. 
For example, SARM~\citep{chen2025sarm} predicts task stages using labels derived from subtask annotations. 
However, these approaches still require manually specifying task stages, defining stage indicators, or providing subtask annotations, which may need to be redesigned for new tasks or scenarios and thus limit scalability to diverse  settings. 
Our work instead learns success-probability feedback without requiring task-specific decomposition.

\noindent\textbf{World-model-derived  rewards.} A separate  line of work leverages world models trained on expert demonstrations to derive dense rewards~\citep{xiao2025world,liu2026world,liu2026dual,zhu2025wmpo}. These methods typically learn to reconstruct observations or predict future states, and then derive rewards from reconstruction errors~\citep{li2025vla} or similarity to ground-truth observations~\citep{hung2025nora}. 
The major obstacle of these methods lies in their reliance on modeling environment dynamics and  high-quality demonstrations.
In comparison, our work directly models success probabilities without learning full environment dynamics or relying on expert demonstrations.

\noindent\textbf{General-purpose reward models.}
With the growing capabilities of foundation models, recent studies have started to explore their adaptation as general-purpose reward models that estimate task progress across diverse manipulation tasks~\citep{lee2026roboreward,tan2025robo,wang2026world,huang2026rynnvalue}. These methods can be categorized into training-free and training-based approaches. 
Training-free methods, such as GVL~\citep{ma2024vision} and TOPReward~\citep{chen2026topreward}, prompt frozen vision-language models (VLMs) to infer task progress from video frames, but they  tend to produce noisy or inconsistent reward signals~\citep{zhang2025rewind}. 
To improve reward quality and stability, training-based methods instead learn progress prediction from large-scale robotic data. 
They pretrain vision foundation models on expert demonstrations using temporal positions of observations as pseudo progress labels, assuming that task completion evolves monotonically over time. However, this assumption becomes restrictive during policy learning, where rollouts naturally contain both suboptimal and failed trajectories with non-monotonic progress.
Existing methods therefore introduce additional data construction and training strategies to improve robustness. 
For example, VLAC~\citep{zhai2025vision} employs pair-wise difference filtering, while Robometer~\citep{liang2026robometer} introduces inter-trajectory preference comparisons to incorporate suboptimal and failed trajectories. Nevertheless, these approaches involve complex pipelines and remain dependent on the quality and coverage of expert demonstrations.
In comparison, eVTA$_0$ introduces a demonstration-free reward learning paradigm that shifts reward modeling from demonstration-based progress estimation to experience-driven success-probability learning. It directly learns intermediate success probabilities from mixed-quality policy rollouts using only terminal outcomes, without requiring expert demonstrations or manually constructed intermediate progress annotations.

\section{Methodology}
\label{sec:method}
In this section, we introduce eVTA$_0$ and its integration into RL training. We first establish the theoretical foundation of success-probability feedback, then describe the design and training method of eVTA$_0$, and finally introduce RLER for adapting reward learning alongside policy optimization.

\subsection{Theoretical Foundation}
\label{sec:formulation}



\paragraph{Reward definition.}
Let $\mathcal O$  denote the observation 
space. For a rollout of length $T$, let
$\omega=(o_0,o_1,\ldots,o_T)\in\Omega$,
where $o_t\in\mathcal O$, and $\Omega$ denotes
the space of all such rollouts. Let
$H_t=(o_0,o_1,\ldots,o_t)$ denote the trajectory prefix
up to timestep $t$.
A given rollout policy $\pi$ and the environment induce a probability space
$(\Omega,\mathcal F,\mathbb P_\pi)$, where $\mathcal F$ is the event space
and $\mathbb P_\pi$ is the rollout distribution. Let
$\mathcal F_t:=\sigma(H_t)$ denote the information revealed by this prefix,
with $\mathcal F_t\subseteq\mathcal F_{t+1}$. Each rollout terminates at $T$
and provides only a binary terminal success label $S\in\{0,1\}$, so the
terminal sparse reward is $r_T:=S$. 
At each intermediate timestep, we define the reward as the  probability of eventual task success conditioned on the information revealed by the trajectory prefix:
\begin{equation}
    \label{eq:success-probability-reward}
    r_t:=\mathbb E_\pi[S\mid\mathcal F_t]
=\mathbb P_\pi(S=1\mid\mathcal F_t).
\end{equation}


\begin{theorem}[Temporal consistency of success-probability rewards]
\label{theorem:martingale}
The success probability reward defined above satisfies
$r_t=\mathbb E_\pi[r_T\mid\mathcal F_t]$ for every $t\le T$.
Moreover, for every $t<T$, it satisfies the temporal consistency property:
\begin{equation}
\label{eq:martingale}
\mathbb E_\pi[r_{t+1}\mid\mathcal F_t]=r_t.
\end{equation}
\end{theorem}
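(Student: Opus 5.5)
The plan is to use only two standard properties of conditional expectation: the tower property and measurability. Write $r_T=S$ and $r_t=\mathbb E_\pi[S\mid\mathcal F_t]$ by definition. Since $S\in\{0,1\}$ is bounded, it is integrable, so every conditional expectation below exists and is a.s.\ unique. All equalities are understood $\mathbb P_\pi$-almost surely.

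\textbf{First claim.} We need $r_t=\mathbb E_\pi[r_T\mid\mathcal F_t]$. Substitute $r_T=S$ into the right-hand side. It becomes $\mathbb E_\pi[S\mid\mathcal F_t]$, which is $r_t$ by \eqref{eq:success-probability-reward}. For $t=T$ there is one subtlety: the definition gives $r_T=\mathbb E_\pi[S\mid\mathcal F_T]$, while the terminal reward was set to $r_T:=S$. To reconcile these, I will note that the terminal label is revealed at termination, so $S$ is $\mathcal F_T$-measurable. This holds if $S$ is a (measurable) function of the full rollout $\omega=H_T$, because then $\mathcal F_T=\sigma(H_T)$ contains $\sigma(S)$. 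Measurability then gives $\mathbb E_\pi[S\mid\mathcal F_T]=S$, so the two definitions agree.

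\textbf{Temporal consistency.} Fix $t<T$. Since $H_t$ is a coordinate projection of $H_{t+1}$, we have $\mathcal F_t\subseteq\mathcal F_{t+1}$. Apply the tower property:
\[
\mathbb E_\pi[r_{t+1}\mid\mathcal F_t]=\mathbb E_\pi\bigl[\mathbb E_\pi[S\mid\mathcal F_{t+1}]\bigm|\mathcal F_t\bigr]=\mathbb E_\pi[S\mid\mathcal F_t]=r_t .
\]
This shows that $(r_t)_{t\le T}$ is a Doob martingale with respect to the filtration $(\mathcal F_t)$. For completeness, I would also check adaptedness: each $r_t$ is $\mathcal F_t$-measurable by construction. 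I would also check integrability, which follows from $0\le r_t\le 1$.

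\textbf{Main obstacle.} There is no real analytic difficulty here. The only delicate point is the modeling assumption at $t=T$, namely that $S$ is $\mathcal F_T$-measurable. If $S$ could depend on information beyond $H_T$, the first claim should be read with $r_T$ replaced by $S$ itself. The martingale property still holds unchanged, since $\mathbb E_\pi[S\mid\mathcal F_T]$ then plays the role of the terminal value. I would state this assumption explicitly: success is determined by the final trajectory.
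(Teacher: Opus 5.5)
Your proposal is correct and matches the paper's proof: the first claim follows by substituting $r_T=S$ into the definition of $r_t$, and the martingale identity follows from the tower property using $\mathcal F_t\subseteq\mathcal F_{t+1}$. You also note that the case $t=T$ requires $S$ to be $\mathcal F_T$-measurable, so that $r_T:=S$ agrees with $\mathbb E_\pi[S\mid\mathcal F_T]$; the paper uses this silently, and it is natural in its setup where $\Omega$ consists of full rollouts $\omega=H_T$.
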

The temporal consistency property above suggests that sparse terminal rewards can be propagated backward through one-step bootstrapping. The following theorem further shows that such bootstrapping consistently recovers the underlying success-probability reward.


\begin{theorem}[Population consistency of one-step bootstrapping]
\label{theorem:bootstrap-consistency}
Consider the ideal backward regression defined by the terminal anchor
$p_T^\star:=S$ and, for each $t<T$:
\begin{equation}
\label{eq:one-step-objective}
p_t^\star
\in
\arg\min_{g\in L_2(\mathcal F_t)}
\mathbb E_\pi\left[\left(g-p_{t+1}^\star\right)^2\right],
\end{equation}
where $L_2(\mathcal F_t)$ denotes the square-integrable
$\mathcal F_t$-measurable candidate predictions. Then the population minimizer
is unique up to almost-sure equality and satisfies, for every $t\le T$:
\begin{equation}
\label{eq:one-step-optimal}
p_t^\star=r_t=\mathbb E_\pi[S\mid\mathcal F_t].
\end{equation}
Thus, terminal supervision can be propagated backward through ideal one-step
regression without changing the population success-probability target.
\end{theorem}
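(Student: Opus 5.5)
The plan is a backward induction on $t$ from $T$ down to $0$, combining the $L_2$-projection characterization of conditional expectation with the tower property already used in Theorem~\ref{theorem:martingale}. We will show that, for every $t\le T$, the set of minimizers in \eqref{eq:one-step-objective} is a single almost-sure equivalence class, namely $\mathbb E_\pi[S\mid\mathcal F_t]$. The base case is immediate: $p_T^\star=S$ by definition. Since $S$ is $\mathcal F$-measurable, and we take $\mathcal F_T$ to carry the terminal outcome, we have $S=\mathbb E_\pi[S\mid\mathcal F_T]=r_T$. This is exactly the convention $r_T:=S$ under which Theorem~\ref{theorem:martingale} holds at $t=T$.

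For the inductive step, assume $p_{t+1}^\star=r_{t+1}$ almost surely. The key tool is the standard fact that, for any $Y\in L_2(\mathcal F)$ and any sub-$\sigma$-algebra $\mathcal G$, the map $g\mapsto\mathbb E[(g-Y)^2]$ on $L_2(\mathcal G)$ is minimized uniquely (a.s.) by $\mathbb E[Y\mid\mathcal G]$. We will verify this directly. Let $m=\mathbb E[Y\mid\mathcal G]$, which lies in $L_2(\mathcal G)$ by conditional Jensen. Then
\[
\mathbb E[(g-Y)^2]=\mathbb E[(g-m)^2]+2\,\mathbb E[(g-m)(m-Y)]+\mathbb E[(m-Y)^2].
\]
The cross term vanishes: condition on $\mathcal G$ and pull out the $\mathcal G$-measurable, square-integrable factor $g-m$, using Cauchy–Schwarz for integrability. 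Hence the objective equals $\mathbb E[(g-m)^2]+\mathrm{const}$, so it is minimized exactly when $g=m$ almost surely. This gives both existence and uniqueness up to a.s.\ equality.

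Applying this with $Y=p_{t+1}^\star$ and $\mathcal G=\mathcal F_t$ is legitimate because $0\le S\le 1$ gives $|p_{t+1}^\star|\le 1$, so $Y\in L_2$. We obtain $p_t^\star=\mathbb E_\pi[r_{t+1}\mid\mathcal F_t]$. By the temporal consistency property \eqref{eq:martingale} of Theorem~\ref{theorem:martingale}, this equals $r_t$. Equivalently, the tower property with $\mathcal F_t\subseteq\mathcal F_{t+1}$ gives $\mathbb E[\mathbb E[S\mid\mathcal F_{t+1}]\mid\mathcal F_t]=\mathbb E[S\mid\mathcal F_t]$. One point needs care: the induction hypothesis holds only almost surely. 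Replacing $Y$ by an a.s.-equal variable changes neither the objective values nor the conditional expectation, so the step remains valid.

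The argument is routine, so the only real obstacle is bookkeeping rather than any difficult estimate. Two things must be stated explicitly. First, "unique" means unique as an element of $L_2(\mathcal F_t)$, i.e.\ up to null sets. Second, the base case $S=\mathbb E[S\mid\mathcal F_T]$ requires $S$ to be $\mathcal F_T$-measurable; if it is not, we simply anchor the recursion at $S$ itself. The induction still yields $p_t^\star=r_t$ for all $t<T$, and the $t=T$ statement is then read as $p_T^\star=S=r_T$ by definition.
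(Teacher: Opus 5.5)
Your proposal is correct and follows essentially the same route as the paper: a backward induction, the orthogonal decomposition of the squared error around $m_t=\mathbb E_\pi[p_{t+1}^\star\mid\mathcal F_t]$ with the cross term removed by pulling out the $\mathcal F_t$-measurable factor, and then temporal consistency (Theorem~\ref{theorem:martingale}) to identify $m_t=r_t$. Your extra remarks are bookkeeping the paper leaves implicit rather than a different argument. One concerns the almost-sure well-definedness of $p_{t+1}^\star$ across induction steps. The other is the fact that $p_T^\star=\mathbb E_\pi[S\mid\mathcal F_T]$ requires $S$ to be $\mathcal F_T$-measurable, which the paper's base case ``$p_T^\star=S=r_T$ by definition'' silently assumes.
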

This result provides theoretical justification for deriving intermediate success-probability feedback from sparse terminal outcomes: although only terminal signals are available, the success probability at intermediate timesteps can be recursively characterized through one-step bootstrapping.
Detailed proofs of both theorems are provided in Appendix~\ref{appendix:math-proof}.

\subsection{Design}

Building upon the  formulation in Section~\ref{sec:formulation}, we introduce eVTA$_0$ to approximate the probability of eventual task success conditioned on the trajectory history. Specifically, eVTA$_0$  approximates the ideal success-probability reward $p_t^\star$ in Theorem~\ref{theorem:bootstrap-consistency} with a learnable function $\hat p_\theta$ parameterized by $\theta$. Unlike existing general-purpose reward models that estimate task progress, eVTA$_0$ directly models eventual task success, making its prediction naturally aligned with sparse terminal binary outcomes while providing dense intermediate feedback for policy learning. The details of eVTA$_0$ are described below.

\begin{figure}
    \centering
    \includegraphics[width=0.97\textwidth]{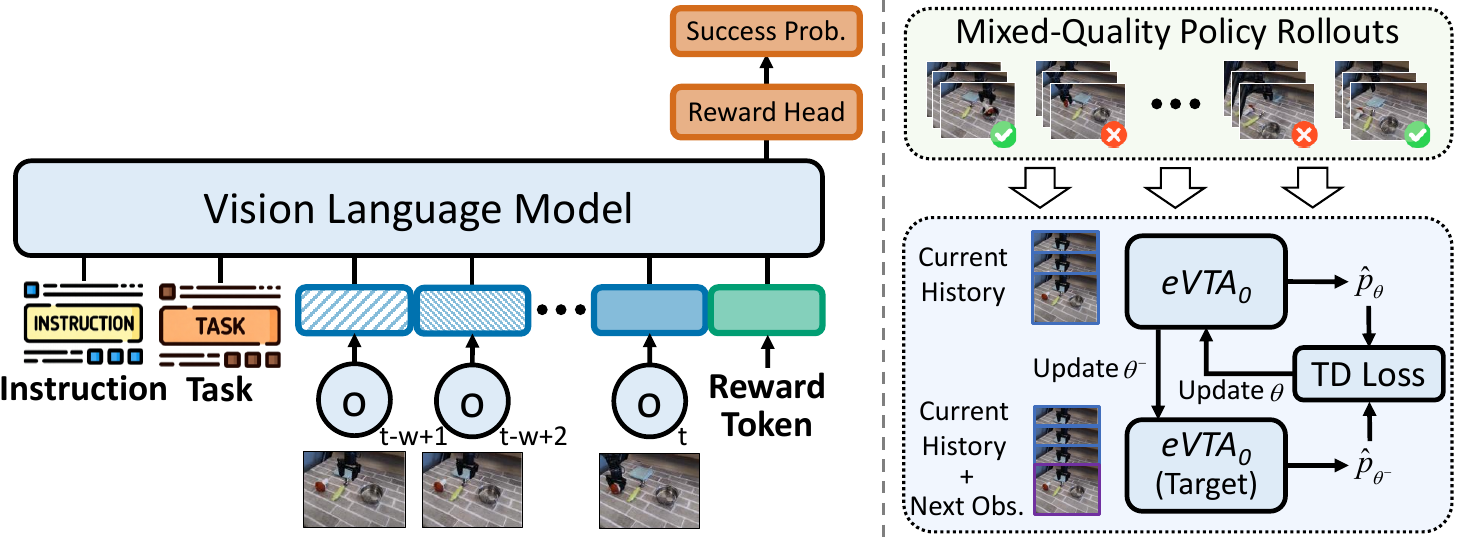}
    \vspace{-0.3cm}
    \caption{
Overview of eVTA$_0$. The left panel illustrates the model architecture, which estimates the probability of eventual task success conditioned on the task description and recent observation history. The right panel illustrates the training procedure, where eVTA$_0$ is trained on mixed-quality policy rollouts through TD-style bootstrapping.
}
    \label{fig:evta_framework}
\end{figure}

\noindent\textbf{Model architecture.}
To approximate the success probability  in Eq.~\ref{eq:success-probability-reward}, eVTA$_0$ predicts eventual task success from task descriptions and recent observation histories. Since maintaining the complete trajectory prefix is computationally expensive, eVTA$_0$ uses a sliding context window  $\mathcal{O}_t=\{o_{t-w+1},\ldots,o_t\}$ of $w$ recent observations as the input, where $w$ denotes the window size. The  description of the target manipulation task $\ell$ is additionally provided to enable task-conditioned success prediction  across diverse  tasks.

Figure~\ref{fig:evta_framework} depicts the architecture of eVTA$_0$. To infer success probabilities from visual trajectories and task objectives, eVTA$_0$ is built upon a VLM to exploit its visual and linguistic understanding  capabilities. The input sequence is formally defined as:
\begin{equation}
x_t=[I,\ell,\mathcal{O}_t,\texttt{[REWARD]}],
\end{equation}
where $I$ is the instruction prompt specifying the success-probability prediction task, $\ell$ describes the target manipulation task, $\mathcal{O}_t$ denotes the observation history, and \texttt{[REWARD]} is a special token that queries the model for the success probability.
Let $F_\psi$ denote the VLM backbone. We extract the hidden representation corresponding to the \texttt{[REWARD]} token, $h_t=F_\psi(x_t)_{\texttt{[REWARD]}}$. Next, a lightweight reward prediction head $g_\phi$ is  applied to estimate the success probability:
\begin{equation}
\hat p_\theta(x_t)=\sigma(g_\phi(h_t)),
\end{equation}
where $\sigma(\cdot)$ denotes the sigmoid function to map the prediction into $[0,1]$.

\noindent\textbf{Training recipe.}
Motivated by Theorem~\ref{theorem:bootstrap-consistency}, we train eVTA$_0$ with a simple temporal-difference (TD) style learning recipe, where a slowly updated target eVTA$_0$ $\hat{p}_{\theta^-}$ parameterized by $\theta^-$ is introduced to generate bootstrapped targets for non-terminal transitions.
Specifically, we construct the training dataset $\mathcal D$ from policy-generated rollouts:
\begin{equation}
\mathcal D=
\{(x_t,x_t^+,d_t,z_t)\},
\end{equation}
where $d_t\in\{0,1\}$ indicates whether timestep $t$ is terminal and $z_t\in\{0,1\}$ denotes the terminal success outcome. 
Here, $x_t$ denotes the input of   eVTA$_0$ $\hat{p}_\theta$, while $x_t^+=[I,\ell,\mathcal O_t^+,\texttt{[REWARD]}]$ represents the input of the target eVTA$_0$ $\hat{p}_{\theta^-}$ , where $\mathcal O_t^+=\{o_{t-w+1},\ldots,o_t,o_{t+1}\}$ extends the  observation history with the next observation $o_{t+1}$. 
This additional observation approximates the one-step-ahead information required by the  bootstrap update. The TD target is then computed as:
\begin{equation}
y_t=d_tz_t+(1-d_t)\hat p_{\theta^-}(x_t^+).
\end{equation}
For terminal transitions, the target directly uses the observed sparse outcome. For non-terminal transitions, the target eVTA$_0$ provides a bootstrapped estimate of the future success probability, enabling terminal supervision to propagate backward along the trajectory.

The parameters of eVTA$_0$ are optimized by minimizing the prediction error between the estimated success probability and the TD target:
\begin{equation}
\mathcal L_{\mathrm{eVTA}}(\theta)
=
\mathbb E_{(x_t,x_t^+,d_t,z_t)\sim\mathcal D}
\left[
(\hat p_\theta(x_t)-y_t)^2
\right].
\end{equation}
The target eVTA$_0$ is updated every $K$ steps using exponential moving average:
\begin{equation}
\theta^-
\leftarrow
\alpha\theta+(1-\alpha)\theta^-,
\end{equation}
where $\alpha$ controls the target update rate.

This TD-style bootstrapping recipe enables eVTA$_0$ to learn dense success-probability feedback from only terminal  outcomes, without  expert demonstrations or  intermediate progress annotations. The detailed training procedure is summarized in Algorithm~\ref{alg:evta-training} of Appendix~\ref{appendix:evta-training}.

\noindent\textbf{Training data.}
The training data for eVTA$_0$ is collected by simply rolling out a robot policy in the environment, where both successful and failed trajectories are retained to form a mixed-quality dataset. Each trajectory only provides a terminal outcome signal, with $z=1$ for successful trajectories and $z=0$ for failed ones. To better utilize failed rollouts, we also adopt an augmentation strategy to assign soft terminal targets for failed trajectories. Details are provided in Appendix~\ref{appendix:evta-training}.

\subsection{Integration into RL Training}

\begin{figure}
    \centering
    \includegraphics[width=0.95\textwidth]{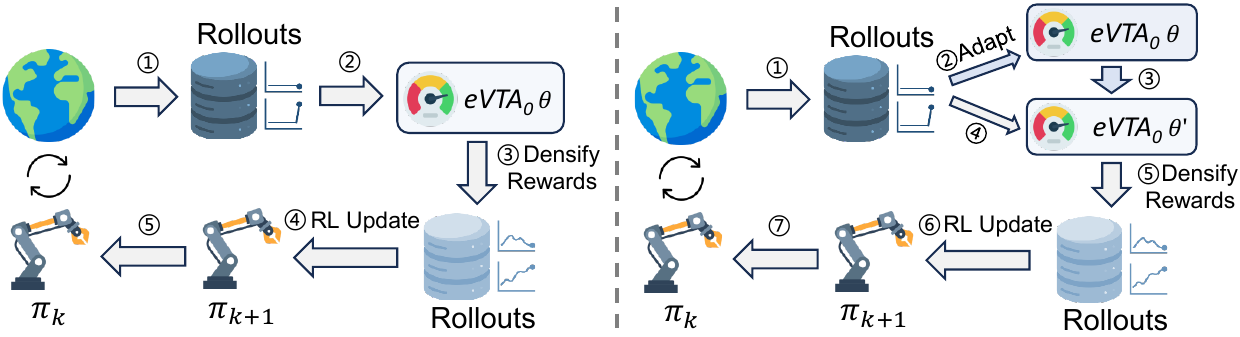}
    \vspace{-0.3cm}
    \caption{
        Illustration of RL training with a fixed eVTA$_0$ (left) and the proposed RL with Evolving Rewards  (RLER) framework (right). RLER  adapts eVTA$_0$ using newly collected policy rollouts, enabling reward learning and policy optimization to  evolve together.
        }
    \label{fig:evta_rl}
\end{figure}

\noindent\textbf{Reward transformation.} Given the success probability $\hat p_t=\hat p_\theta(x_t)$ predicted by eVTA$_0$, a straightforward choice is to directly use $\hat p_t$ as the dense reward. However, since $\hat p_t\in[0,1]$ is non-negative, directly using it as a per-step reward may unintentionally favor longer trajectories, because additional rollout timesteps continue to contribute positive rewards even when the predicted success probability does not improve. To avoid this issue, we convert the predicted success probability into a dense reward signal as:
\begin{equation}
\label{eq:reward-transformation}
r_t^{\mathrm{eVTA}}
=
-(1-\hat p_t)
=
\hat p_t-1.
\end{equation}
Since $1-\hat p_t$ corresponds to the predicted probability of eventual task failure, $r_t^{\mathrm{eVTA}}$ can be naturally interpreted as a failure-probability penalty. States associated with higher probabilities of eventual failure receive larger penalties, whereas the penalty approaches zero as the predicted success probability approaches one. Notably, since this transformation only introduces a constant shift, it preserves the ordering and pairwise differences of eVTA$_0$ predictions. With this transformation, a trained eVTA$_0$ can be directly integrated into standard RL training loops, including online RL (e.g., PPO~\citep{schulman2017proximal} and GRPO~\citep{guo2025deepseek}) and offline RL (e.g., IQL~\citep{kostrikov2021offline}, RECAP~\citep{intelligence2025pi06}).


\noindent\textbf{RL with Evolving Rewards  (RLER).}
The ability of eVTA$_0$ to learn directly from policy-generated rollouts enables reward learning to adapt alongside policy optimization. We therefore introduce RLER, a closed-loop framework in which reward modeling and policy optimization  evolve together. As shown in Figure~\ref{fig:evta_rl}, new rollouts generated by the policy  are periodically used to update eVTA$_0$. The updated eVTA$_0$ then provides success-probability feedback for subsequent policy optimization. As the  behavior distribution of the policy  changes,  updating eVTA$_0$ allows it to capture newly emerging behaviors and provide more informative guidance for future policy updates. 

\section{Evaluation}


We conduct extensive experiments to evaluate the effectiveness of eVTA$_0$  for embodied policy learning, with the goal of answering the following key questions. \textbf{(Q1) Reward quality:} How does eVTA$_0$ compare with existing reward models in terms of reward quality? \textbf{(Q2) Policy learning:} How effectively does eVTA$_0$ facilitate policy learning compared with existing methods? \textbf{(Q3) Reward evolution:} Can eVTA$_0$  adapt to an evolving policy and further improve policy optimization through RLER? More experiments and analysis are provided  in  Appendix~\ref{appendix:additional-results}.

\begin{table}[t]
   \centering
   \renewcommand{\arraystretch}{0.875}
   \caption{VOC and VROC results on LIBERO and MetaWorld. The best and second-best average results of each benchmark are shown in bold and underlined, respectively.}
   \label{tab:voc-vroc}
   \label{tab:libero-voc-vroc}
   \label{tab:metaworld-voc-vroc}
   \setlength{\tabcolsep}{3pt}
   \resizebox{\textwidth}{!}{%
   \begin{tabular}{@{}lNNNNANNNNA@{}}
       \toprule
       \multicolumn{11}{c}{\textbf{LIBERO}} \\
       \midrule
       & \multicolumn{5}{c}{\textbf{VOC} ($\uparrow$ Better)} & \multicolumn{5}{c}{\textbf{VROC} ($\uparrow$ Better)} \\
       \cmidrule(lr){2-6}\cmidrule(l){7-11}
       \textbf{Method}  & Spatial & Object & Goal & Long & Avg. & Spatial & Object & Goal & Long & Avg. \\
       \midrule
       GVL (Qwen3-VL-4B)              & 0.14 & 0.59 & 0.12 & 0.28 & 0.28 & 0.00 & 0.16 & 0.04 & 0.03 & 0.06 \\
       GVL (Qwen3-VL-8B)              & 0.25 & 0.46 & 0.33 & 0.37 & 0.35 & -0.01 & -0.04 & -0.03 & 0.10 & 0.00 \\
       GVL (GPT-5)                & 0.79 & 0.79 & 0.72 & 0.66 & 0.74 & 0.30 & 0.53 & 0.52 & 0.57 & 0.48 \\
       TOPReward (Qwen3-VL-4B)        & 0.70 & 0.73 & 0.74 & 0.68 & 0.71 & -0.73 & -0.33 & -0.71 & -0.55 & -0.58 \\
       TOPReward (Qwen3-VL-8B)        & 0.65 & 0.85 & 0.76 & 0.79 & 0.76 & -0.59 & -0.13 & -0.30 & -0.34 & -0.34 \\
       VLAC-2B                    & -0.01 & 0.27 & 0.00 & 0.01 & 0.07 & -0.08 & 0.03 & -0.04 & -0.08 & -0.04 \\
       VLAC-2B (Fine-tuned)        & 0.59 & 0.60 & 0.36 & 0.37 & 0.48 & 0.61 & 0.63 & 0.35 & 0.36 & 0.49 \\
       VLAC-8B                    & -0.06 & 0.54 & -0.13 & -0.04 & 0.08 & -0.11 & 0.52 & -0.23 & -0.03 & 0.04 \\
       VLAC-8B (Fine-tuned)        & 0.67 & 0.65 & 0.41 & 0.39 & 0.53 & 0.59 & 0.58 & 0.35 & 0.37 & 0.47 \\
       Robometer-4B               & 0.03 & 0.01 & -0.04 & -0.04 & -0.01 & 0.05 & 0.08 & 0.29 & -0.06 & 0.09 \\
       Robometer-4B (Fine-tuned)   & 0.97 & 0.98 & 0.97 & 0.98 & \textbf{0.98} & 0.22 & 0.67 & 0.80 & 0.77 & \underline{0.61} \\
       \rowcolor{evtarow} \textbf{eVTA$_0$} (Qwen3-VL-4B) & 0.87 & 0.90 & 0.84 & 0.77 & \underline{0.84} & 0.73 & 0.88 & 0.74 & 0.73 & \textbf{0.77} \\
       \midrule
       \multicolumn{11}{c}{\textbf{MetaWorld}} \\
       \midrule
       & \multicolumn{5}{c}{\textbf{VOC} ($\uparrow$ Better)} & \multicolumn{5}{c}{\textbf{VROC} ($\uparrow$ Better)} \\
       \cmidrule(lr){2-6}\cmidrule(l){7-11}
       \textbf{Method}  & Easy & \makebox[\linewidth][c]{Medium} & Hard & \makebox[\linewidth][c]{Very Hard} & Avg. & Easy & \makebox[\linewidth][c]{Medium} & Hard & \makebox[\linewidth][c]{Very Hard} & Avg. \\
       \midrule
       GVL (Qwen3-VL-4B)              & 0.34 & 0.17 & 0.04 & 0.18 & 0.18 & -0.09 & -0.09 & 0.04 & 0.11 & -0.01 \\
       GVL (Qwen3-VL-8B)              & 0.41 & 0.22 & 0.14 & 0.16 & 0.23 & -0.23 & -0.20 & -0.05 & -0.05 & -0.13 \\
       GVL (GPT-5)                & 0.82 & 0.71 & 0.40 & 0.61 & 0.64 & 0.65 & 0.59 & -0.02 & 0.18 & 0.35 \\
       TOPReward (Qwen3-VL-4B)        & 0.78 & 0.78 & 0.79 & 0.83 & \underline{0.79} & -0.81 & -0.82 & -0.75 & -0.84 & -0.81 \\
       TOPReward (Qwen3-VL-8B)        & 0.86 & 0.58 & 0.44 & 0.61 & 0.62 & -0.65 & -0.64 & -0.69 & -0.70 & -0.67 \\
       VLAC-2B                    & -0.01 & 0.06 & -0.02 & 0.08 & 0.03 & 0.07 & 0.04 & -0.02 & 0.08 & 0.04 \\
       VLAC-2B (Fine-tuned)        & 0.24 & 0.13 & 0.29 & 0.36 & 0.25 & 0.41 & 0.27 & 0.46 & 0.47 & 0.40 \\
       VLAC-8B                    & 0.02 & -0.02 & 0.07 & -0.01 & 0.01 & -0.07 & -0.05 & -0.05 & 0.08 & -0.02 \\
       VLAC-8B (Fine-tuned)        & 0.41 & 0.27 & 0.36 & 0.63 & 0.42 & 0.48 & 0.24 & 0.41 & 0.57 & \underline{0.43} \\
       Robometer-4B               & -0.03 & -0.24 & 0.23 & 0.04 & 0.00 & -0.06 & -0.01 & 0.05 & 0.00 & -0.01 \\
       Robometer-4B (Fine-tuned)   & 0.90 & 0.83 & 0.62 & 0.71 & 0.77 & 0.68 & 0.07 & 0.13 & -0.27 & 0.15 \\
       \rowcolor{evtarow} \textbf{eVTA$_0$} (Qwen3-VL-4B) & 0.79 & 0.81 & 0.82 & 0.80 & \textbf{0.80} & 0.38 & 0.52 & 0.42 & 0.43 & \textbf{0.44} \\
       \bottomrule
   \end{tabular}%
   }
\end{table}

\subsection{Reward Quality}




\noindent\textbf{Setup.}
We conduct experiments on two representative benchmarks LIBERO~\citep{fei2025libero} and MetaWorld~\citep{yu2020meta}, which provide diverse task suites for comprehensive evaluation. We compare eVTA$_0$ against state-of-the-art general-purpose reward models, including training-free methods GVL~\citep{ma2024vision} and TOPReward~\citep{chen2026topreward}, and pretrained reward models VLAC~\citep{zhai2025vision} and Robometer~\citep{liang2026robometer}. 
For pretrained reward models, we evaluate both the original VLAC and Robometer models and their variants  fine-tuned on the downstream benchmarks. 
We systematically evaluate the reward quality of each method from two complementary perspectives: \textit{temporal consistency} and \textit{outcome distinction}. Temporal consistency is measured on expert demonstrations using VOC~\citep{ma2024vision} and VROC~\citep{zhai2025vision}, which assess whether rewards consistently reflect progress toward and deviation from task completion, respectively. Outcome distinction is evaluated on mixed-quality policy rollouts using MSE and Kendall's $\tau_a$~\citep{liang2026robometer}, measuring terminal success estimation and discrimination between successful and failed trajectories, respectively. Additional details on the experimental setup, baselines and eVTA$_0$ implementation are provided in  Appendix~\ref{appendix:reward-quality-evaluation}.

\noindent\textbf{Results.}
Tables~\ref{tab:voc-vroc} and~\ref{tab:libero-mse-kendall} report the performance of each method. For temporal consistency, training-free methods generally achieve   higher VOC than VROC, indicating that they can better capture forward task progression than deviations  from successful completion. A similar pattern is observed for pretrained reward models. Although downstream fine-tuning substantially improves their performance, their VROC remains much lower than VOC. This  may stem from their strong reliance on expert demonstrations, which provide limited information about how policy rollouts deviate from successful completion. Consequently, the learned rewards tend to  provide less informative feedback on failed rollouts, which may limit their ability to guide policies to learn  from failures. In comparison, eVTA$_0$, despite being trained solely on mixed-quality policy rollouts without expert demonstrations, performs consistently well in both metrics. It achieves the best average VROC on both LIBERO and MetaWorld, while achieving the second-best and best average VOC, respectively. More importantly, when evaluated directly on mixed-quality policy rollouts, eVTA$_0$ also achieves the best average MSE and Kendall's $\tau_a$. These results show that eVTA$_0$ better distinguishes successful and failed trajectories while providing more accurate estimates of terminal success likelihood. This capability is particularly important for RL policy learning, where collected trajectories naturally contain a mixture of successful, suboptimal, and failed rollouts.

\begin{table}[t]
    \centering
    \renewcommand{\arraystretch}{0.875}
    \caption{MSE and Kendall's $\tau_a$ results on LIBERO and MetaWorld. The best and second-best average results of each benchmark are shown in bold and underlined, respectively.}
    \label{tab:libero-mse-kendall}
    \label{tab:metaworld-mse-kendall}
    \setlength{\tabcolsep}{3pt}
    \resizebox{\textwidth}{!}{%
    \begin{tabular}{@{}lNNNNANNNNA@{}}
        \toprule
        \multicolumn{11}{c}{\textbf{LIBERO}} \\
        \midrule
        & \multicolumn{5}{c}{\textbf{MSE} ($\downarrow$ Better)} & \multicolumn{5}{c}{\textbf{Kendall's} $\tau_a$ ($\uparrow$ Better)} \\
        \cmidrule(lr){2-6}\cmidrule(l){7-11}
        \textbf{Method} & Spatial & Object & Goal & Long & Avg. & Spatial & Object & Goal & Long & Avg. \\
        \midrule
        GVL (Qwen3-VL-4B)              & 0.66 & 0.54 & 0.39 & 0.44 & 0.51 & 0.00 & 0.73 & 0.13 & 0.00 & 0.21 \\
        GVL (Qwen3-VL-8B)              & 0.72 & 0.76 & 0.50 & 0.43 & 0.60 & -0.17 & 0.00 & 0.18 & 0.02 & 0.01 \\
        GVL (GPT-5)                & 0.64 & 0.49 & 0.40 & 0.40 & 0.48 & 0.37 & 0.60 & 0.30 & -0.05 & 0.31 \\
        TOPReward (Qwen3-VL-4B)        & 0.73 & 0.76 & 0.53 & 0.56 & 0.64 & -0.27 & 0.62 & 0.21 & 0.25 & 0.20 \\
        TOPReward (Qwen3-VL-8B)        & 0.73 & 0.76 & 0.53 & 0.56 & 0.64 & -0.13 & 0.33 & 0.34 & 0.33 & 0.22 \\
        VLAC-2B                    & 0.72 & 0.71 & 0.52 & 0.56 & 0.63 & 0.00 & 0.08 & 0.02 & -0.02 & 0.02 \\
        VLAC-2B (Fine-tuned)        & 0.62 & 0.73 & 0.47 & 0.51 & 0.58 & 0.49 & 0.10 & 0.21 & 0.07 & 0.22 \\
        VLAC-8B                    & 0.72 & 0.72 & 0.51 & 0.55 & 0.63 & -0.10 & 0.10 & 0.05 & -0.08 & -0.01 \\
        VLAC-8B (Fine-tuned)        & 0.53 & 0.70 & 0.44 & 0.51 & 0.54 & 0.51 & 0.13 & 0.28 & 0.00 & 0.23 \\
        Robometer-4B               & 0.25 & 0.22 & 0.19 & 0.17 & 0.21 & 0.22 & 0.09 & -0.02 & -0.16 & 0.03 \\
        Robometer-4B (Fine-tuned)   & 0.07 & 0.09 & 0.13 & 0.10 & \underline{0.10} & 0.77 & 0.77 & 0.96 & 0.90 & \underline{0.85} \\
        \rowcolor{evtarow} \textbf{eVTA$_0$} (Qwen3-VL-4B) & 0.06 & 0.03 & 0.07 & 0.08 & \textbf{0.06} & 0.98 & 1.00 & 1.00 & 0.93 & \textbf{0.98} \\
        \midrule
        \multicolumn{11}{c}{\textbf{MetaWorld}} \\
        \midrule
        & \multicolumn{5}{c}{\textbf{MSE} ($\downarrow$ Better)} & \multicolumn{5}{c}{\textbf{Kendall's} $\tau_a$ ($\uparrow$ Better)} \\
        \cmidrule(lr){2-6}\cmidrule(l){7-11}
        \textbf{Method} & Easy & \makebox[\linewidth][c]{Medium} & Hard & \makebox[\linewidth][c]{Very Hard} & Avg. & Easy & \makebox[\linewidth][c]{Medium} & Hard & \makebox[\linewidth][c]{Very Hard} & Avg. \\
        \midrule
        GVL (Qwen3-VL-4B)              & 0.34 & 0.48 & 0.39 & 0.47 & 0.42 & 0.31 & -0.12 & 0.19 & -0.31 & 0.02 \\
        GVL (Qwen3-VL-8B)              & 0.46 & 0.51 & 0.39 & 0.50 & 0.47 & 0.30 & -0.16 & 0.43 & -0.26 & 0.08 \\
        GVL (GPT-5)                & 0.22 & 0.32 & 0.26 & 0.36 & 0.29 & 0.80 & 0.02 & 0.57 & 0.15 & 0.39 \\
        TOPReward (Qwen3-VL-4B)        & 0.63 & 0.68 & 0.60 & 0.62 & 0.63 & 0.16 & -0.10 & 0.45 & 0.07 & 0.14 \\
        TOPReward (Qwen3-VL-8B)        & 0.63 & 0.68 & 0.60 & 0.62 & 0.63 & 0.73 & 0.19 & 0.82 & 0.53 & 0.57 \\
        VLAC-2B                    & 0.61 & 0.68 & 0.60 & 0.62 & 0.63 & 0.22 & 0.00 & 0.07 & 0.02 & 0.08 \\
        VLAC-2B (Fine-tuned)        & 0.60 & 0.62 & 0.59 & 0.60 & 0.60 & 0.47 & 0.41 & 0.43 & 0.66 & 0.49 \\
        VLAC-8B                    & 0.63 & 0.68 & 0.60 & 0.62 & 0.63 & 0.02 & 0.00 & 0.07 & 0.14 & 0.06 \\
        VLAC-8B (Fine-tuned)        & 0.61 & 0.61 & 0.56 & 0.57 & 0.59 & 0.38 & 0.40 & 0.47 & 0.41 & 0.42 \\
        Robometer-4B               & 0.17 & 0.23 & 0.14 & 0.26 & 0.20 & 0.73 & 0.30 & 0.78 & 0.05 & 0.47 \\
        Robometer-4B (Fine-tuned)   & 0.08 & 0.08 & 0.14 & 0.22 & \underline{0.13} & 0.80 & 0.63 & 0.88 & 0.30 & \underline{0.65} \\
        \rowcolor{evtarow} \textbf{eVTA$_0$} (Qwen3-VL-4B) & 0.02 & 0.05 & 0.03 & 0.06 & \textbf{0.04} & 1.00 & 0.84 & 1.00 & 0.93 & \textbf{0.94} \\
        \bottomrule
    \end{tabular}%
    }
 \end{table}
\begin{figure}[t]
    \centering
    \includegraphics[width=0.99\textwidth]{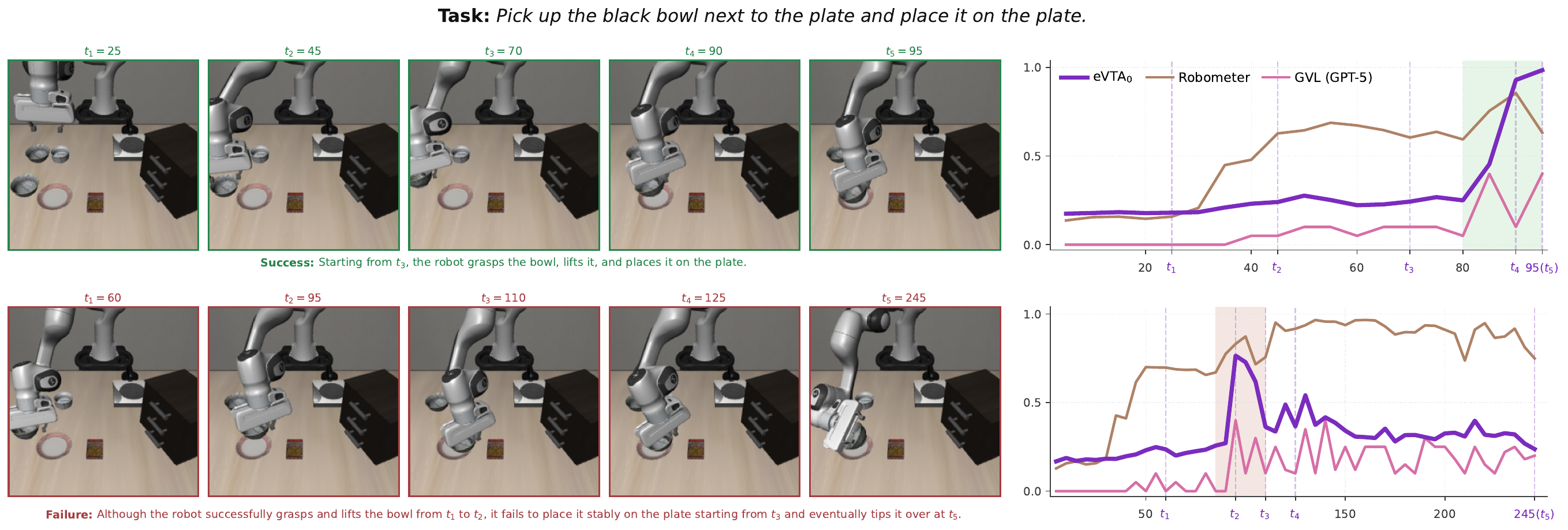}
    \vspace{-0.3cm}
    \caption{
        {Qualitative comparison on successful and failed trajectories under the same  task.}
        }
    \label{fig:visualization}
    \vspace{-0.3cm}
 \end{figure}

\noindent\textbf{Qualitative comparison.}
Figure~\ref{fig:visualization} further visualizes the predictions of eVTA$_0$, Robometer, and GVL (GPT-5) for two trajectories of different outcomes under the same manipulation task. 
For the successful trajectory, eVTA$_0$ remains conservative during early manipulation and sharply increases its predicted success probability once the bowl is reliably moved toward and placed on the plate, eventually approaching one at task completion. For the failed trajectory, eVTA$_0$ initially increases its prediction when the robot successfully grasps and lifts the bowl, but decreases after the placement becomes unstable and remains low when the bowl is eventually tipped over. In contrast, Robometer continues to assign high scores even after the failed trajectory deviates from successful completion, while GVL produces substantially noisier predictions.
These results show that eVTA$_0$ better tracks the actual evolution of task success throughout policy rollouts.

\subsection{Policy Learning}

\begin{figure}[t]
   \centering
   \includegraphics[width=0.99\textwidth]{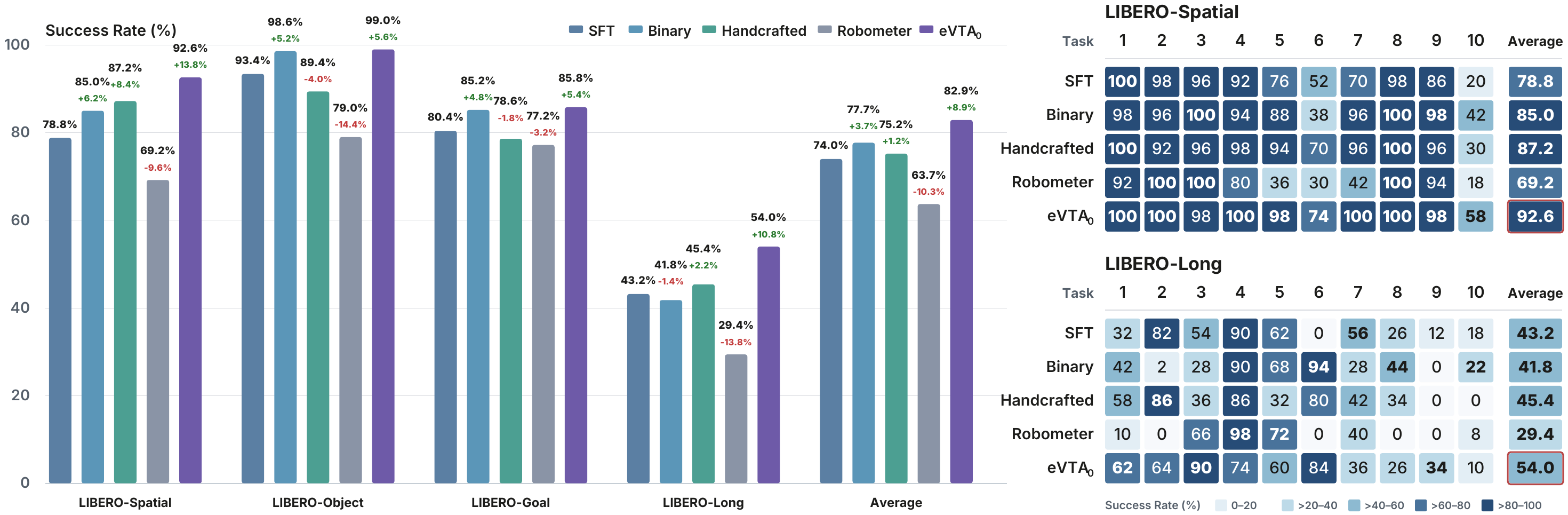}
   \vspace{-0.2cm}
\caption{Policy performance with different rewards under the same RL training budget.}
   \label{fig:rq2_policy_performance}
   \vspace{-0.3cm}
\end{figure}

\begin{figure}[t]
    \centering
    \includegraphics[width=0.99\textwidth]{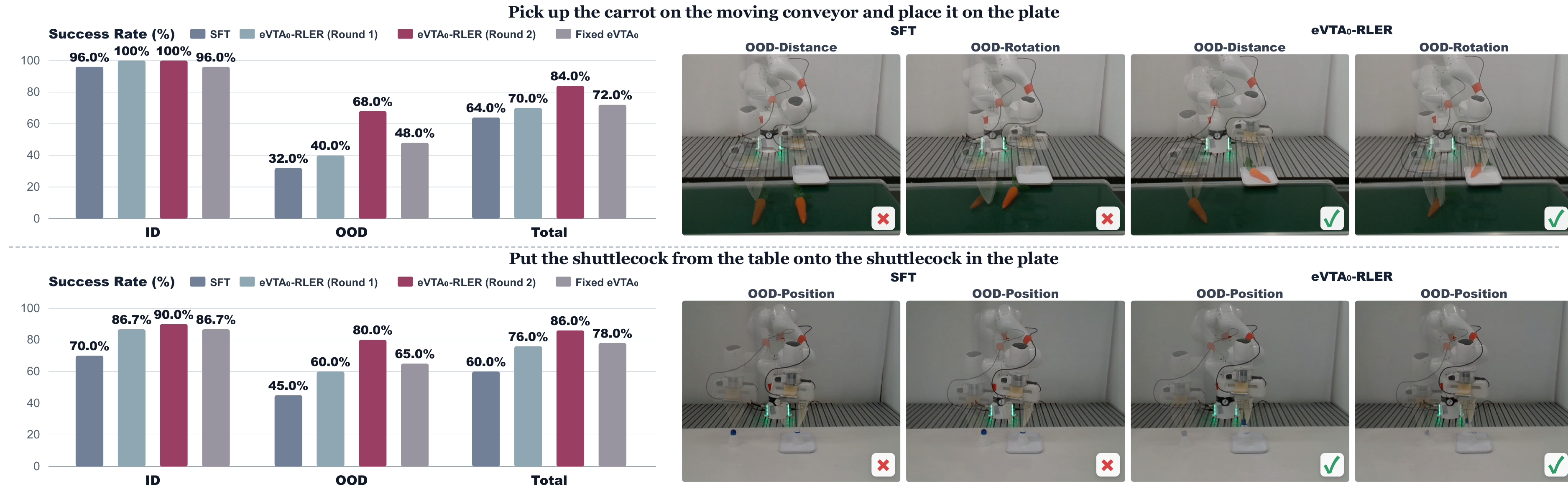}
    \vspace{-0.2cm}
 \caption{Real-world success rates across RLER rounds compared with SFT and fixed eVTA$_0$.}
    \label{fig:real_world}
    \vspace{-0.3cm}
 \end{figure}

\noindent\textbf{Setup.}
We evaluate whether eVTA$_0$ can effectively improve RL-based policy learning on LIBERO task suites, covering diverse manipulation settings and long-horizon tasks. We use $\pi_{0.5}$~\citep{intelligence2025pi} as the base policy\footnote{We use the $\pi_{0.5}$ checkpoint from RLinf~\citep{yu2026rlinf} (\url{https://huggingface.co/RLinf/RLinf-Pi05-LIBERO-SFT}), fine-tuned on LIBERO with 40 demonstrations in total across all tasks.} and GRPO as the RL algorithm, and compare eVTA$_0$ with binary reward, the handcrafted reward used in $\pi_{0.6}^\star$~\citep{intelligence2025pi06}, and Robometer. The SFT policy before RL is also reported as a reference. 
All methods are trained under the same budget of 6,400 interaction episodes  with identical RL configurations, allowing us to compare  policy improvement and sample efficiency under a controlled setting. 
More details are provided in  Appendix~\ref{appendix:policy-learning-evaluation}.


\noindent\textbf{Results.}
Figure~\ref{fig:rq2_policy_performance} reports the policy performance after RL training with different reward signals. eVTA$_0$ achieves the best average success rate on all task suites, corresponding to absolute improvements of $13.8\%$, $5.6\%$, $5.4\%$, and $10.8\%$ on LIBERO-Spatial, LIBERO-Object, LIBERO-Goal, and LIBERO-Long over the SFT policy, respectively. In particular, the advantage is more evident on  more challenging, long-horizon task suite LIBERO-Long. Handcrafted rewards yield only $+2.2\%$ changes over SFT, while binary rewards and Robometer result in $1.4\%$ and $13.8\%$ degradations, respectively. In contrast, eVTA$_0$ improves the success rate by $10.8\%$. The task-wise results also show that eVTA$_0$ maintains a more balanced performance across different tasks, whereas other rewards exhibit large performance variations across tasks. The relatively weak RL performance of Robometer is also consistent with its less favorable reward-quality results on mixed-quality rollouts, suggesting that reward quality on policy-generated data can be important for effective policy learning. 
These results show that eVTA$_0$ provides consistently effective reward guidance across diverse  settings and enables more sample-efficient  policy learning under the same training budget.

\subsection{Reward Evolution}
\noindent\textbf{Setup.}
We evaluate whether reward learning can adapt alongside policy optimization through RLER in real-world manipulation tasks. Starting from the SFT policy, RLER alternates between adapting eVTA$_0$ with newly collected rollouts and optimizing the policy using IQL. We compare against a fixed eVTA$_0$ trained on the same accumulated rollout data, with matched data and policy-training budgets within each task. We report both in-distribution (ID) and out-of-distribution (OOD) performance. More details are available  in Appendix~\ref{appendix:real-world-evaluation}.

\noindent\textbf{Results.}
Figure~\ref{fig:real_world} shows that RLER consistently improves policy performance over successive learning rounds. Across real-world tasks, RLER improves the overall success rate by $20\%$--$26\%$ over the initial SFT policy and outperforms the fixed eVTA$_0$ baseline by $8\%$--$12\%$, with particularly clear gains under OOD conditions. Notably, these improvements are achieved under matched data and policy-training budgets, suggesting that the benefit comes from adapting reward feedback as the policy evolves. 
Conceptually, RLER transforms eVTA$_0$ from a static evaluator into an adaptive teacher that follows the policy's learning frontier: as the policy encounters new behaviors and failure modes, its own rollouts refine the reward feedback, allowing the learning signal  to evolve with the learner. This property is especially valuable for real-world RL, where reliable reward models are difficult to obtain beforehand and early policy rollouts are often imperfect. RLER enables reward learning to start from such imperfect experience and progressively improve alongside the policy.

\section{Conclusion}
In this work, we explore  a new paradigm for reward learning where reward feedback can be learned directly from policy rollouts rather than relying on expert demonstrations.
We show that sparse terminal outcomes implicitly define dense success-probability feedback, and build eVTA$_0$ to learn such feedback directly from mixed-quality policy rollouts without expert demonstrations or intermediate  labels. Furthermore, we  introduce RLER, which allows reward feedback to adapt alongside policy optimization. Experiments across simulation and real-world manipulation tasks show that eVTA$_0$ provides more reliable reward signals and improves policy learning, while RLER further enables reward learning to evolve with the policy. We hope this work provides a simple yet effective foundation toward scalable reward learning for generalist robot policies, where rewards can be learned  from experience and adapt as the policy evolves. 

\section*{AI Use Statement}
In this work, we used generative AI tools for assisting in the writing of proofs, providing feedback on research experiments, and assisting with method implementation. We have not used generative AI tools for other tasks with required disclosure, and the rest of the required disclosure tasks are not applicable to this work. Additionally, we used generative AI tools for suggesting experimental parameters, polishing the research paper to improve readability, and creating or modifying experimental figures or tables. We have reviewed all AI-assisted work. AI-assisted proofs and experimental suggestions were carefully examined by the authors, AI-assisted code was verified and tested for correctness, and AI-assisted figures and tables were manually checked against the corresponding experimental results. We take responsibility for the final content of this work, including text, claims or artifacts produced with the aid of generative AI.

\section*{Ethics Statement}

This work does not involve human subjects, personally identifiable information, or other ethical concerns requiring specific discussion.


\section*{Reproducibility Statement}

We provide the methodological and experimental details necessary to facilitate reproduction of our results. The formulation of eVTA$_0$, its TD-style training procedure, and its integration into RL are described in Section~\ref{sec:method}. Complete proofs of the theoretical results are provided in Appendix~\ref{appendix:math-proof}. Appendix~\ref{appendix:evta-training} further details the design and training of eVTA$_0$, including the instruction prompt, training algorithm, and dataset augmentation strategy. Appendix~\ref{appendix:experimental-setup} describes the experimental setups, including datasets, baselines, eVTA$_0$ implementation, model configurations, training hyperparameters, and evaluation settings. Additional ablation studies, qualitative analyses, and detailed experimental results are reported in Appendix~\ref{appendix:additional-results}. Appendix~\ref{appendix:discussion} provides additional discussion and analysis of the design and properties of eVTA$_0$. We also plan to release the source code, the training and evaluation data generated in this work, and the trained model checkpoints to further facilitate reproducibility.

\bibliography{main}
\bibliographystyle{iclr2027_conference}

\clearpage
\appendix
\startcontents[appendices]
\begingroup
\setcounter{tocdepth}{2}
\section*{Appendix Contents}
\printcontents[appendices]{}{1}{}
\endgroup
\clearpage
\section{Proofs of Theorems}
\label{appendix:math-proof}

Recall that $S\in\{0,1\}$ denotes the terminal success label of a rollout of length $T$, $\mathcal F_t=\sigma(H_t)$ is the information revealed by the trajectory prefix up to timestep $t$, and $r_t=\mathbb E_\pi[S\mid\mathcal F_t]$ is the success-probability reward of Eq.~\ref{eq:success-probability-reward}. We provide the detailed proofs of each theorem as below. 

\subsection{Proof of Theorem~\ref{theorem:martingale}}
\label{appendix:math-proof-martingale}

\setcounter{theorem}{0}%
\begingroup
\renewcommand{\theHtheorem}{restated.\thetheorem}%
\begin{theorem}[Temporal consistency of success-probability rewards]
\label{theorem:martingale-appendix}
The success probability reward defined above satisfies
$r_t=\mathbb E_\pi[r_T\mid\mathcal F_t]$ for every $t\le T$.
Moreover, for every $t<T$, it satisfies the temporal consistency property:
\begingroup
\renewcommand{\theHequation}{restated.eq.martingale}%
\begin{equation}
\label{eq:martingale-appendix}
\tag{\ref{eq:martingale}}
\mathbb E_\pi[r_{t+1}\mid\mathcal F_t]=r_t.
\end{equation}
\endgroup
\end{theorem}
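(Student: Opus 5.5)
The plan is to recognize $r_t=\mathbb E_\pi[S\mid\mathcal F_t]$ as the Doob martingale generated by the bounded random variable $S$ along the filtration $\mathcal F_0\subseteq\cdots\subseteq\mathcal F_T$. Both claims then follow from two standard properties of conditional expectation: the pull-out property for measurable variables and the tower property.

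First, conditional expectations are well defined. Since $S\in\{0,1\}$, we have $S\in L_1(\mathbb P_\pi)$ (indeed $S\in L_\infty$), so $r_t$ exists and is unique up to a.s. equality for every $t$.

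Next, we identify the terminal reward. The definition in Eq.~\ref{eq:success-probability-reward} gives $r_T=\mathbb E_\pi[S\mid\mathcal F_T]$, while the sparse terminal reward is $r_T:=S$. To see these agree, note that the terminal label is determined by the completed rollout, so $S$ is $\sigma(H_T)=\mathcal F_T$-measurable. Then $\mathbb E_\pi[S\mid\mathcal F_T]=S$ a.s. by the pull-out property. This is the one step that needs care: I will state explicitly that $S$ is a function of $\omega=H_T$, as implied by the setup in which each rollout in $\Omega$ "provides" its label. If $S$ were only defined on a larger $\mathcal F$, I would instead take $r_T:=\mathbb E_\pi[S\mid\mathcal F_T]$ as the terminal anchor, and the rest of the argument would be unchanged.

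With that in place, the first claim follows directly. For any $t\le T$,
\[
\mathbb E_\pi[r_T\mid\mathcal F_t]=\mathbb E_\pi\bigl[\mathbb E_\pi[S\mid\mathcal F_T]\bigm|\mathcal F_t\bigr]=\mathbb E_\pi[S\mid\mathcal F_t]=r_t,
\]
using the tower property and the inclusion $\mathcal F_t\subseteq\mathcal F_T$, which comes from chaining $\mathcal F_s\subseteq\mathcal F_{s+1}$.

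For the temporal consistency property with $t<T$, apply the tower property once more, this time with $\mathcal F_t\subseteq\mathcal F_{t+1}$:
\[
\mathbb E_\pi[r_{t+1}\mid\mathcal F_t]=\mathbb E_\pi\bigl[\mathbb E_\pi[S\mid\mathcal F_{t+1}]\bigm|\mathcal F_t\bigr]=\mathbb E_\pi[S\mid\mathcal F_t]=r_t \quad\text{a.s.}
\]
All equalities are understood almost surely. No real obstacle is expected beyond justifying the $\mathcal F_T$-measurability of $S$.
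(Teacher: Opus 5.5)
Your proof is correct and is essentially the paper's argument. The first claim follows from $r_T=S$ together with the definition $r_t=\mathbb E_\pi[S\mid\mathcal F_t]$ (you route it through the tower property after checking $\mathbb E_\pi[S\mid\mathcal F_T]=S$), and the second is the tower property applied with $\mathcal F_t\subseteq\mathcal F_{t+1}$. Your explicit check that $S$ is $\mathcal F_T$-measurable is left implicit in the paper, and it is exactly what makes the two definitions of $r_T$ agree in the case $t=T$.
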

\endgroup

\begin{proof}
Since $r_T=S$ and, by definition, $r_t=\mathbb E_\pi[S\mid\mathcal F_t]$
(Eq.~\ref{eq:success-probability-reward}), we have
\begin{equation*}
\mathbb E_\pi[r_T\mid\mathcal F_t]
=
\mathbb E_\pi[S\mid\mathcal F_t]
=
r_t,
\end{equation*}
which proves the first claim.

For the second claim, recall the tower property of conditional expectation
\citep{williams1991probability}: if $\mathcal G_1\subseteq\mathcal G_2$ and $X$
is integrable, then:
\begin{equation*}
\mathbb E_\pi
\left[
\mathbb E_\pi[X\mid\mathcal G_2]
\mid
\mathcal G_1
\right]
=
\mathbb E_\pi[X\mid\mathcal G_1].
\end{equation*}
Applying this with $X=S$, $\mathcal G_1=\mathcal F_t$, and
$\mathcal G_2=\mathcal F_{t+1}$ (admissible since
$\mathcal F_t\subseteq\mathcal F_{t+1}$), together with the definition of
$r_{t+1}$, gives:
\begin{equation*}
\mathbb E_\pi[r_{t+1}\mid\mathcal F_t]
=
\mathbb E_\pi
\left[
\mathbb E_\pi[S\mid\mathcal F_{t+1}]
\mid
\mathcal F_t
\right]
=
\mathbb E_\pi[S\mid\mathcal F_t]
=
r_t,
\end{equation*}
which completes the proof of Eq.~\ref{eq:martingale-appendix}.
\end{proof}

\subsection{Proof of Theorem~\ref{theorem:bootstrap-consistency}}
\label{appendix:math-proof-consistency}

\begingroup
\renewcommand{\theHtheorem}{restated.\thetheorem}%
\begin{theorem}[Population consistency of one-step bootstrapping]
\label{theorem:bootstrap-consistency-appendix}
Consider the ideal backward regression defined by the terminal anchor
$p_T^\star:=S$ and, for each $t<T$:
\begingroup
\renewcommand{\theHequation}{restated.eq.objective}%
\begin{equation}
\label{eq:one-step-objective-appendix}
\tag{\ref{eq:one-step-objective}}
p_t^\star
\in
\arg\min_{g\in L_2(\mathcal F_t)}
\mathbb E_\pi\left[\left(g-p_{t+1}^\star\right)^2\right],
\end{equation}
\endgroup
where $L_2(\mathcal F_t)$ denotes the square-integrable
$\mathcal F_t$-measurable candidate predictions. Then the population minimizer
is unique up to almost-sure equality and satisfies, for every $t\le T$:
\begingroup
\renewcommand{\theHequation}{restated.eq.optimal}%
\begin{equation}
\label{eq:one-step-optimal-appendix}
\tag{\ref{eq:one-step-optimal}}
p_t^\star=r_t=\mathbb E_\pi[S\mid\mathcal F_t].
\end{equation}
\endgroup
Thus, terminal supervision can be propagated backward through ideal one-step
regression without changing the population success-probability target.
\end{theorem}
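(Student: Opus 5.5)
The plan is a backward induction on $t$ from $T$ down to $0$, using the $L_2$-projection characterization of conditional expectation together with the tower property already invoked in the proof of Theorem~\ref{theorem:martingale}.

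\textbf{Base case.} For $t=T$ the claim is $p_T^\star=S=\mathbb E_\pi[S\mid\mathcal F_T]$. Here I take $S$ to be $\mathcal F_T$-measurable, since the outcome is determined by the full rollout $H_T$ (equivalently $r_T=S$, as the paper sets). This gives $p_T^\star=r_T$.

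\textbf{Key lemma (projection).} Let $Y\in L_2(\mathcal F)$ and let $\mathcal G$ be a sub-$\sigma$-algebra. For any $g\in L_2(\mathcal G)$, write $Z=\mathbb E_\pi[Y\mid\mathcal G]$ and expand
\begin{equation*}
\mathbb E_\pi[(g-Y)^2]=\mathbb E_\pi[(g-Z)^2]+\mathbb E_\pi[(Z-Y)^2]+2\,\mathbb E_\pi[(g-Z)(Z-Y)].
\end{equation*}
The cross term vanishes: $g-Z$ is bounded in $L_2$ and $\mathcal G$-measurable, so conditioning on $\mathcal G$ ("taking out what is known") gives $\mathbb E_\pi[(g-Z)\,\mathbb E_\pi[Z-Y\mid\mathcal G]]=0$. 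Hence the objective equals $\mathbb E_\pi[(g-Z)^2]$ plus a constant independent of $g$. It is therefore minimized exactly when $g=Z$ almost surely, which gives both existence and uniqueness up to a.s. equality. Square-integrability needs a brief check: $S\in\{0,1\}$ is bounded, so every iterate is bounded in $[0,1]$ and lies in $L_2$.

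\textbf{Inductive step.} Assume $p_{t+1}^\star=r_{t+1}$ a.s. The lemma with $Y=p_{t+1}^\star$ and $\mathcal G=\mathcal F_t$ gives the unique minimizer
\begin{equation*}
p_t^\star=\mathbb E_\pi[p_{t+1}^\star\mid\mathcal F_t]=\mathbb E_\pi[r_{t+1}\mid\mathcal F_t].
\end{equation*}
Replacing $p_{t+1}^\star$ by $r_{t+1}$ is legitimate because a.s.-equal targets yield the same objective. By Eq.~\ref{eq:martingale} of Theorem~\ref{theorem:martingale}, this conditional expectation equals $r_t$, which closes the induction.

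The main point needing care is not conceptual but bookkeeping. Uniqueness holds only up to null sets, so the induction must carry "a.s. equality" through each step, and one must observe that the argmin in Eq.~\ref{eq:one-step-objective} is well defined regardless of which a.s. version of $p_{t+1}^\star$ is chosen.
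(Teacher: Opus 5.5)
Your proposal is correct and follows essentially the same route as the paper: backward induction, expansion of the squared loss around $m_t=\mathbb E_\pi[p_{t+1}^\star\mid\mathcal F_t]$ with a vanishing cross term, then the temporal consistency of Theorem~\ref{theorem:martingale} to identify $m_t=r_t$. Your remark that the base case $S=\mathbb E_\pi[S\mid\mathcal F_T]$ requires $S$ to be $\mathcal F_T$-measurable is a point the paper leaves implicit when it sets $r_T:=S$.
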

\endgroup

\begin{proof}
Because $S\in\{0,1\}$, each
$r_t=\mathbb E_\pi[S\mid\mathcal F_t]$ is bounded and therefore belongs to
$L_2(\mathcal F_t)$. Hence all of the regression risks below are finite. We
prove the claim by backward induction on $t$.

\textbf{Base case ($t=T$).} By definition, $p_T^\star=S=r_T$ almost surely.

\textbf{Induction step.} Fix $t<T$ and assume the induction hypothesis that
$p_{t+1}^\star=r_{t+1}$ almost surely. Define:
\begin{equation*}
m_t:=\mathbb E_\pi[p_{t+1}^\star\mid\mathcal F_t].
\end{equation*}
The variable $m_t$ is $\mathcal F_t$-measurable and lies in
$L_2(\mathcal F_t)$. For any $g\in L_2(\mathcal F_t)$, since
$g-p_{t+1}^\star=(g-m_t)+(m_t-p_{t+1}^\star)$, expanding the square gives:
\begin{equation*}
\begin{aligned}
\mathbb E_\pi\!\left[\left(g-p_{t+1}^\star\right)^2\right]
={}&\mathbb E_\pi\!\left[(g-m_t)^2\right]
+\mathbb E_\pi\!\left[(p_{t+1}^\star-m_t)^2\right] \\
&+2\mathbb E_\pi\!\left[(g-m_t)(m_t-p_{t+1}^\star)\right].
\end{aligned}
\end{equation*}
For the cross term, we first evaluate the conditional expectation of its
second factor. By the linearity of conditional expectation, the
$\mathcal F_t$-measurability of $m_t$, and the definition of $m_t$,
\begin{equation*}
\mathbb E_\pi[m_t-p_{t+1}^\star\mid\mathcal F_t]
=\mathbb E_\pi[m_t\mid\mathcal F_t]
-\mathbb E_\pi[p_{t+1}^\star\mid\mathcal F_t]
=m_t-m_t
=0.
\end{equation*}
Moreover, since $g-m_t$ is $\mathcal F_t$-measurable, conditioning the whole
product on $\mathcal F_t$ and pulling out the known factor $g-m_t$ (tower
property; \citealp{williams1991probability}) gives:
\begin{equation*}
\mathbb E_\pi\!\left[(g-m_t)(m_t-p_{t+1}^\star)\right]
=
\mathbb E_\pi\!\left[(g-m_t)
\mathbb E_\pi[m_t-p_{t+1}^\star\mid\mathcal F_t]\right]
=0.
\end{equation*}
Consequently,
\begin{equation*}
\begin{aligned}
\mathbb E_\pi\!\left[\left(g-p_{t+1}^\star\right)^2\right]
={}&\mathbb E_\pi\!\left[(g-m_t)^2\right]
+\mathbb E_\pi\!\left[
\operatorname{Var}_\pi(p_{t+1}^\star\mid\mathcal F_t)
\right].
\end{aligned}
\end{equation*}
The second term does not depend on $g$, whereas the first term is nonnegative
and equals zero if and only if $g=m_t$ almost surely. Therefore the minimizer
over $L_2(\mathcal F_t)$, unique up to almost-sure equality, is $m_t$.
Finally, the definition of $m_t$, the induction hypothesis, and the temporal
consistency of Eq.~\ref{eq:martingale-appendix} give:
\begin{equation*}
m_t
=\mathbb E_\pi[p_{t+1}^\star\mid\mathcal F_t]
=\mathbb E_\pi[r_{t+1}\mid\mathcal F_t]
=r_t.
\end{equation*}
Thus $p_t^\star=r_t$ almost surely, completing the backward induction for all
$t\le T$.
\end{proof}

\section{Additional Details of eVTA$_0$ Design}
\label{appendix:evta-training}

\begin{algorithm}[t]
    \caption{Training eVTA$_0$ via TD-Style Bootstrapping}
    \label{alg:evta-training}
    \begin{algorithmic}[1]
    
    \Require Training dataset $\mathcal{D}$, eVTA$_0$ $\hat p_{\theta}$,
    number of training epochs $E$, target update interval $K$,
    target update rate $\alpha$, learning rate $\eta$, batch size $B$
    \Ensure Trained eVTA$_0$ $\hat p_{\theta}$
    
    \State Initialize target parameters $\theta^- \leftarrow \theta$
    \State Initialize training step $s \leftarrow 0$
    
    \For{$e=1,\ldots,E$}
        \For{each mini-batch $\mathcal{B}\subset\mathcal{D}$ with $|\mathcal{B}|=B$}
    
            \For{each $(x_t,x_t^+,d_t,z_t)\in\mathcal{B}$}
                \If{$d_t=1$}
                    \State $y_t \leftarrow z_t$
                    \Comment{Terminal outcome}
                \Else
                    \State $y_t \leftarrow
                    \operatorname{stopgrad}\!\left(
                    \hat p_{\theta^-}(x_t^+)
                    \right)$
                    \Comment{TD bootstrap}
                \EndIf
            \EndFor
    
            \State Compute the TD regression loss:
            \[
            \mathcal{L}_{\mathrm{eVTA}}(\theta)
            =
            \frac{1}{|\mathcal{B}|}
            \sum_{(x_t,x_t^+,d_t,z_t)\in\mathcal{B}}
            \left(
            \hat p_{\theta}(x_t)-y_t
            \right)^2
            \]
    
            \State Update eVTA$_0$:
            $\theta \leftarrow
            \theta-\eta\nabla_{\theta}
            \mathcal{L}_{\mathrm{eVTA}}(\theta)$
    
            \State $s \leftarrow s+1$
    
            \If{$s \bmod K = 0$}
                \State $\theta^- \leftarrow
                \alpha\theta+(1-\alpha)\theta^-$
                \Comment{Update target eVTA$_0$}
            \EndIf
    
        \EndFor
    \EndFor
    
    \State \Return $\hat p_{\theta}$
    
\end{algorithmic}
\end{algorithm}

\noindent\textbf{Instruction prompt.} 
As shown in Figure~\ref{fig:evta_framework}, we use an instruction prompt to specify the role and prediction objective of eVTA$_0$:

\begin{tcolorbox}[
    colback=gray!5,
    colframe=gray!50,
    boxrule=0.6pt,
    arc=2pt,
    left=6pt,
    right=6pt,
    top=5pt,
    bottom=5pt
]
\small\textit{
You are a reward model for robotic manipulation. Given the task instruction and the ordered history of observations, predict the probability of successful task completion. Return a scalar reward in $[0,1]$, where 1 indicates a high probability of success and 0 indicates a high probability of failure. The multimodal inputs are provided in the following order: task prompt, history observation images, reward token.
}
\end{tcolorbox}

The prompt explicitly specifies the success-probability prediction task, output range, and input ordering for eVTA$_0$.

\noindent\textbf{Training algorithm.}
Algorithm~\ref{alg:evta-training} summarizes the training algorithm of eVTA$_0$. For terminal transitions, the target is given directly by the observed task outcome; for non-terminal transitions, the target is the next-step success prediction from a slowly updated target eVTA$_0$. In this way, sparse terminal supervision is progressively propagated to intermediate timesteps, implementing the recursive regression characterized in Theorem~\ref{theorem:bootstrap-consistency}.

\noindent \textbf{Dataset augmentation.} 
While eVTA$_0$ can be trained using binary terminal outcomes alone, we adopt an additional trajectory-level augmentation strategy  for failed rollouts. This is because  assigning a hard target of zero  for failed trajectories can be overly conservative, as some failures may occur after the maximum episode length is reached despite exhibiting behaviors close to task completion. Therefore, following the strategy proposed in \citep{fei2026srpo}, we augment the training dataset by assigning soft terminal targets to failed trajectories. 
The key idea is to estimate the success likelihood of failed trajectories rather than treating all failures as equally unsuccessful. Specifically, we use a pretrained world model to extract trajectory-level representations and apply DBSCAN to cluster successful trajectories into representative success centers. For each failed trajectory, its soft terminal target $z\in[0,1]$ is computed based on the Euclidean distance to the nearest success center, normalized by the mean and standard deviation of distances across failed trajectories, and mapped through a sigmoid function. This augmentation assigns higher terminal targets to failed trajectories that exhibit behaviors closer to successful executions, providing more informative supervision for training eVTA$_0$.

\section{Details of Experimental Setup}
\label{appendix:experimental-setup}
\subsection{Reward Quality Evaluation}
\label{appendix:reward-quality-evaluation}
\noindent\textbf{Evaluation metrics.}
We provide additional details on the four metrics used in the reward quality evaluation. 
For temporal consistency, we use VOC~\citep{ma2024vision} and VROC~\citep{zhai2025vision}. 
VOC measures the consistency between reward predictions and the forward progression of a demonstration, while VROC examines whether the predicted reward decreases when the demonstration is temporally reversed. Together, they characterize whether a reward model responds consistently as the trajectory moves toward or away from task completion. Both metrics are computed on expert demonstrations, with higher values indicating stronger temporal consistency. 
For outcome distinction, we use MSE and Kendall's $\tau_a$~\citep{liang2026robometer}. 
MSE measures the discrepancy between predicted terminal rewards and the associated terminal success probabilities, while Kendall's $\tau_a$ measures how well the predicted rewards distinguish successful trajectories from failed ones. Together, these metrics assess whether a reward model can reliably distinguish successful and failed trajectories and accurately reflect terminal success likelihood. 
Both metrics are computed on mixed-quality rollouts. Lower MSE and higher Kendall's $\tau_a$ indicate better outcome distinction. When computing MSE, we use the final reward prediction of each trajectory and normalize model outputs to $[0,1]$ when necessary, following the corresponding inference method of each model. 

\noindent\textbf{Evaluation datasets.} We evaluate reward quality on two robotic manipulation benchmarks, LIBERO and MetaWorld. LIBERO consists of four task suites, LIBERO-Spatial, LIBERO-Object, LIBERO-Goal, and LIBERO-Long, each containing 10 tasks. For MetaWorld, we evaluate 30 manipulation tasks and follow prior work~\citep{chen2025pirl} to organize them into four difficulty levels: Easy, Medium, Hard, and Very-Hard. Each of the Easy and Medium suites contains 10 tasks, while each of the Hard and Very-Hard suites contains 5 tasks.

For temporal-consistency evaluation, we use expert demonstrations provided by both benchmarks. On LIBERO, we randomly sample 10 demonstrations per task, resulting in 400 demonstrations across 40 tasks. On MetaWorld, we similarly sample 400 demonstrations across 30 tasks: 10 demonstrations per task for the Easy and Medium suites, and 20 demonstrations per task for the Hard and Very-Hard suites. For outcome-distinction evaluation, we use the $\pi_0$~\citep{black2024pi_0} policy fine-tuned with RLinf~\citep{yu2026rlinf}\footnote{we use the publicly released RLinf $\pi_0$ SFT checkpoints for LIBERO and MetaWorld: \url{https://huggingface.co/RLinf/RLinf-Pi0-LIBERO-Spatial-Object-Goal-SFT} and \url{https://huggingface.co/RLinf/RLinf-Pi0-MetaWorld-SFT}, respectively.} to collect online rollouts on both LIBERO and MetaWorld. We collect 400 rollouts for each benchmark as the evaluation datasets. The LIBERO set contains 201 successful and 199 failed trajectories, while the MetaWorld set contains 230 successful and 170 failed trajectories.

\noindent\textbf{Fine-tuning VLAC and Robometer.}
Although VLAC~\citep{zhai2025vision} and Robometer~\citep{liang2026robometer} are pretrained on large-scale demonstration datasets, we empirically find that their performance degrades substantially when directly applied to LIBERO and MetaWorld without downstream adaptation. We therefore follow their original training recipes and fine-tune both models on benchmark-specific data. Specifically, we randomly sample additional 400 demonstrations from each benchmark, without  overlap with the evaluation sets. In particular, Robometer primarily learns progress from expert demonstrations and additionally incorporates failed trajectories through an auxiliary trajectory-comparison objective. Thus, we follow the original paper~\citep{liang2026robometer} and generate failure data by injecting random perturbations into demonstration actions and executing the resulting perturbed trajectories in the environment. This procedure yields 452 additional failed trajectories for each benchmark. We fine-tune both models for 5 epochs using LoRA~\citep{hu2021lora} with rank 32 and set the batch size and learning rate to $32$ and $2\times10^{-5}$, respectively.

\begin{table}[t]
    \centering
    \footnotesize
    \setlength{\tabcolsep}{5pt}
    \renewcommand{\arraystretch}{1.02}
    \caption{RL and $\pi_{0.5}$ training configurations for the policy learning experiments on  LIBERO-Spatial / LIBERO-Object / LIBERO-Goal / LIBERO-Long.}
    \label{tab:policy-training-config}
    \begin{tabular}{@{}p{0.43\textwidth}p{0.51\textwidth}@{}}
        \toprule
        \textbf{Parameter} & \textbf{Configuration} \\
        \midrule
        \multicolumn{2}{@{}l}{\textit{RL configuration}} \\
        RL algorithm & GRPO\\
        Advantage estimator & Group-relative advantages\\
        Training budget & 50 epochs with 6,400 interaction episodes\\
        Parallel environments & 16\\
        Rollouts per epoch  & 8 \\
        Episodes per epoch & 128 \\
        GRPO group size & 8 \\
        Maximum episode length & 240 / 240 / 320 / 480 steps \\
        Batch size & 4 \\
        Gradient accumulation steps & 16 \\
        Policy learning rate & $2\!\times\!10^{-6}$ \\
        Adam $(\beta_1,\beta_2,\epsilon)$ & $(0.9,\,0.95,\,10^{-8})$ \\
        Weight decay  & 0.01  \\
        Gradient clipping & 1.0 \\
        Discount factor $\gamma$ & 0.99 \\
        Policy clipping (lower / upper) & 0.2 / 0.2 \\
        Update epochs & 2 \\
        Advantage normalization & Enabled \\
        Train action expert only & Enabled \\
        Random seed & 42 \\

        \addlinespace[2pt]
        \multicolumn{2}{@{}l}{\textit{$\pi_{0.5}$ configuration}} \\
        Initial policy & RLinf-$\pi_{0.5}$-LIBERO-SFT \\
        Action chunk length & 10 \\
        Denoising steps & 5 \\
        \bottomrule
    \end{tabular}
\end{table}

\noindent\textbf{Implementation of eVTA$_0$.}
We use Qwen3-VL-4B-Instruct~\citep{bai2025qwen3} as the backbone of eVTA$_0$ throughout all experiments. 
We set the observation context window to 5 with a temporal stride of 2. 
Notably, the visual observations consist of both a third-person camera image and a wrist-mounted camera image, providing complementary global and egocentric views of the manipulation process. For each benchmark, we use the SFT $\pi_0$ policy released by RLinf to collect 1,600 policy rollouts for training eVTA$_0$. The LIBERO dataset contains 890 successful and 710 failed trajectories, while the MetaWorld dataset follows the same composition. The rollouts cover all tasks in each benchmark and are collected independently from the evaluation sets, with no trajectory overlap.  We fine-tune eVTA$_0$ for 5 epochs using LoRA with rank 16 and configure the batch size and learning rate as $32$ and $1\times10^{-5}$, respectively. The target network is updated every $K=16$ training steps with a soft-update rate of $\alpha=0.1$. Note that unlike VLAC and Robometer, eVTA$_0$ does not rely on pretraining over large-scale robotic demonstration datasets for reward modeling. Instead, it learns effective reward signals directly from mixed-quality policy rollouts. 

\subsection{Policy Learning Evaluation}
\label{appendix:policy-learning-evaluation}
Following previous work~\citep{chen2025pirl}, we use $\pi_{0.5}$ as the base policy for experiments and initialize it from the few-shot SFT checkpoint released by RLinf. This checkpoint is obtained by fine-tuning $\pi_{0.5}$ with 40 LIBERO demonstrations, corresponding to one demonstration per task across the four task suites. Such a few-shot initialization ensures a positive success rate for each task while preserving substantial room for subsequent RL-based policy improvement. Since our goal is to evaluate how different rewards affect subsequent RL optimization, the few-shot checkpoint provides a more informative setting for distinguishing the effects of different reward signals. Therefore, all reward methods start from the same SFT checkpoint and are trained under the same RL configurations, ensuring that performance differences primarily reflect the effect of the reward signal. Unless otherwise specified, the four LIBERO suites use the same training and evaluation protocol. The detailed configurations are summarized in Tables~\ref{tab:policy-training-config} and~\ref{tab:policy-eval-config}.

\begin{table}[t]
    \centering
    \footnotesize
    \setlength{\tabcolsep}{5pt}
    \renewcommand{\arraystretch}{1.02}
    \caption{$\pi_{0.5}$ evaluation configurations for the policy learning experiments on  LIBERO-Spatial / LIBERO-Object / LIBERO-Goal / LIBERO-Long.}
    \label{tab:policy-eval-config}
    \begin{tabular}{@{}p{0.43\textwidth}p{0.51\textwidth}@{}}
        \toprule
        \textbf{Parameter} & \textbf{Configuration} \\
        \midrule
        Number of tasks  & 10 / 10 / 10 / 10 \\
        Evaluation rollouts per task & 50 \\
        Total rollouts  & 500 / 500 / 500 / 500 \\
        Maximum episode length & 240 / 240 / 320 / 480 steps \\
        Action chunk length & 10 \\
        Denoising steps & 5 \\
        Random seed & 42 \\
        \bottomrule
    \end{tabular}
\end{table}

\begin{table}[t]
    \centering
    \footnotesize
    \setlength{\tabcolsep}{5pt}
    \renewcommand{\arraystretch}{0.98}
    \caption{Configurations of  SFT policies in real-world experiments (\textit{Pick Carrot} / \textit{Put Shuttlecock}).}
    \label{tab:real-world-sft-config}
    \begin{tabular}{@{}p{0.43\textwidth}p{0.51\textwidth}@{}}
        \toprule
        \textbf{Parameter} & \textbf{Configuration} \\
        \midrule
        Training epochs & 5 \\
        Expert demonstrations & 20 / 60 \\
        Batch size & 4 \\
        Gradient accumulation steps & 2 \\
        Learning rate & $1\!\times\!10^{-5}$ \\
        Gradient clipping & 1.0 \\
        Adam $(\beta_1,\beta_2,\epsilon)$ & $(0.9,\,0.95,\,10^{-8})$ \\
        Fine-tuning strategy & Full-model fine-tuning \\

        \addlinespace[2pt]
        \multicolumn{2}{@{}l}{\textit{$\pi_{0.5}$ training/evaluation configuration}} \\
        Action chunk length & 32 \\
        Denoising steps & 5 \\
        \bottomrule
    \end{tabular}
\end{table}

\subsection{Real-World Evaluation}
\label{appendix:real-world-evaluation}
\noindent\textbf{Physical setup.}
Figure~\ref{fig:real_world_configurations} shows the physical platform of our real-world experiments. The platform consists of a Franka Research 3 robotic arm equipped with a parallel gripper and two Intel RealSense D435i RGB-D cameras. One external camera provides a third-person view of the overall workspace, while the other wrist-mounted camera captures local  information around the manipulation area.

\noindent\textbf{Task design and configurations.}
We design two representative manipulation tasks covering different  scenarios.
The first task requires the robot to pick up a carrot from a moving conveyor and place it on the plate. By introducing continuously changing object positions, this task evaluates the policy's adaptation ability in dynamic environments.
The second task requires the robot to put a shuttlecock from the table onto the one  in the plate. Since the shuttlecock is lightweight and difficult to grasp stably, small grasping deviations can cause the object to slip or shift, making this task suitable for evaluating grasp stability and precise control.
As depicted in Figure~\ref{fig:real_world_id_ood_configurations}, to further evaluate generalization, we construct several OOD settings by varying object positions, distances, rotations, and appearances.

\noindent\textbf{Evaluation method.} In our real-world experiments, each task is evaluated with 50 trials covering both in-distribution (ID) and out-of-distribution (OOD) scenarios.
For the \textit{Pick Carrot} task, the evaluation consists of 25 ID trials and 25 OOD trials. Specifically, we perform 5 trials under the OOD-Distance setting and 4 trials for each of the five OOD-Rotation settings. For the \textit{Put Shuttlecock} task, we conduct 30 ID trials and 20 OOD trials, including 10 trials under OOD-Color and 10 trials under OOD-Position.

\begin{figure}[t]
    \centering
    \includegraphics[width=0.95\textwidth]{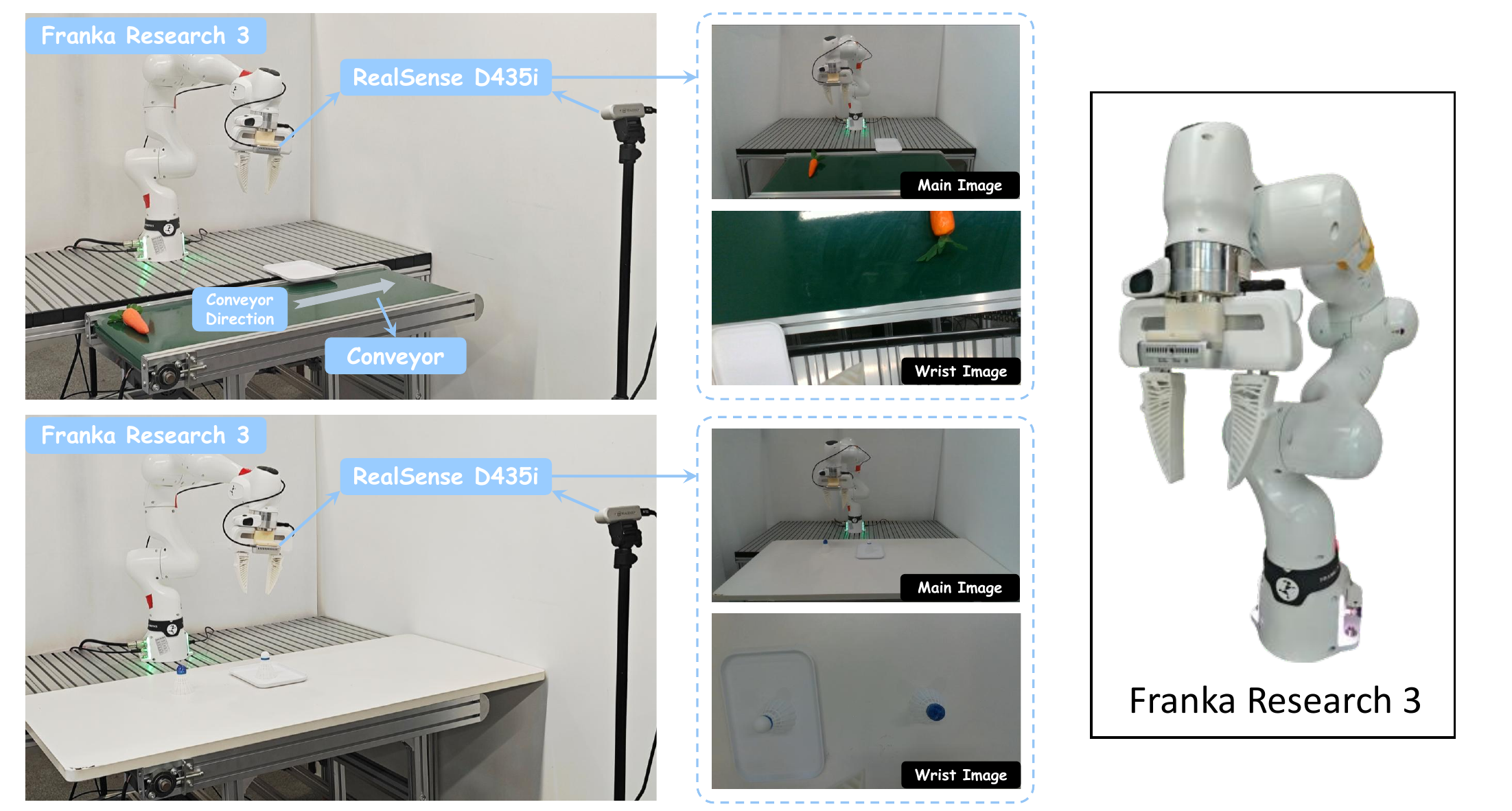}
    \vspace{-0.2cm}
    \caption{The physical workspace used in our real-world experiments, which is equipped with a Franka Research 3 robot arm, two Intel RealSense D435i cameras and optionally a moving conveyor.}
    \label{fig:real_world_configurations}
    \vspace{-0.2cm}
 \end{figure}

 \begin{figure}[t]
    \centering
    \includegraphics[width=0.95\textwidth]{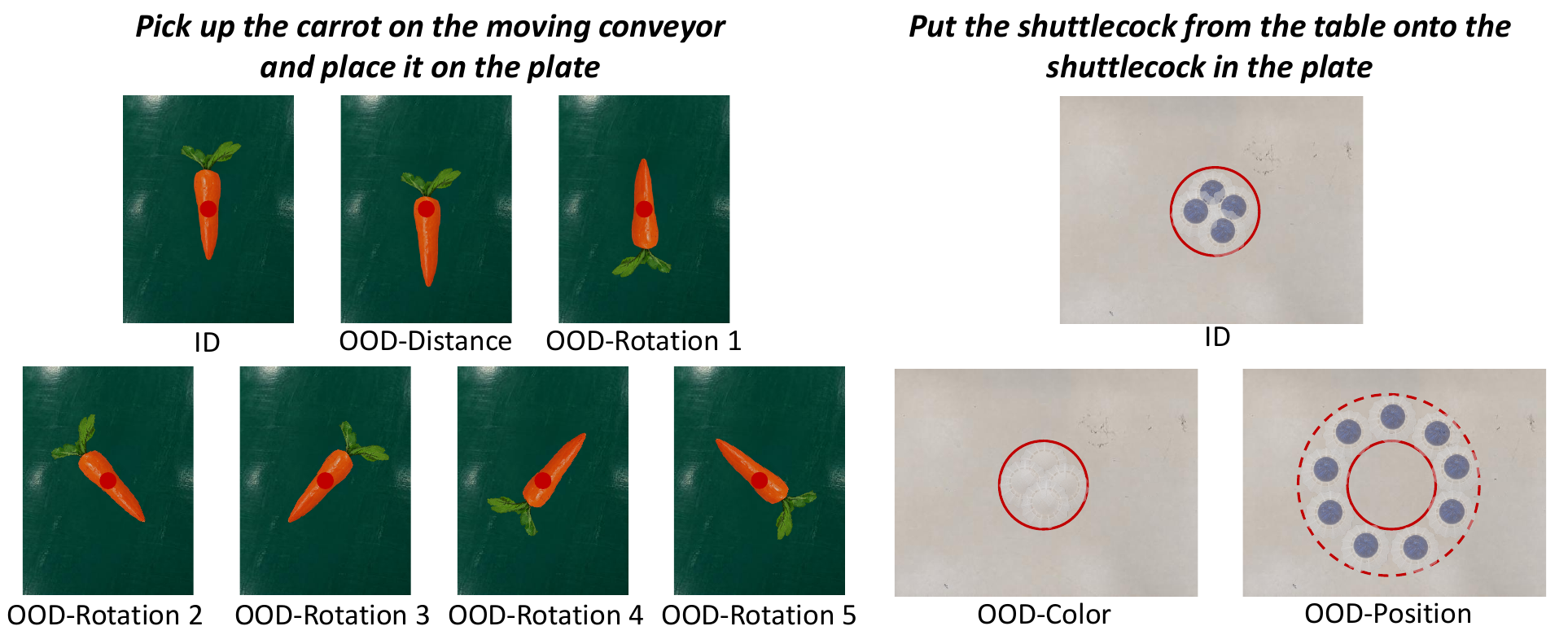}
    \vspace{-0.2cm}
    \caption{The ID and OOD configurations for the real-world experiments. }
    \label{fig:real_world_id_ood_configurations}
    \vspace{-0.2cm}
 \end{figure}

\begin{table}[t]
    \centering
    \footnotesize
    \setlength{\tabcolsep}{5pt}
    \renewcommand{\arraystretch}{0.98}
    \caption{Training configurations for IQL and eVTA$_0$ in real-world  experiments (\textit{RLER Round 1} / \textit{RLER Round 2} / \textit{Fixed eVTA$_0$)}.}
    \label{tab:real-world-iql-srm-config}
    \label{tab:real-world-configurations}
    \begin{tabular}{@{}p{0.43\textwidth}p{0.51\textwidth}@{}}
        \toprule
        \textbf{Parameter} & \textbf{Configuration} \\
        \midrule
        \multicolumn{2}{@{}l}{\textit{IQL configuration}} \\
        Number of trajectories  & 50 / 50 / 100 \\
        Training epochs & 30  \\
        Batch size & 4 \\
        Gradient accumulation steps & 4 \\
        Discount factor $\gamma$ & 1.0 \\
        Expectile $\tau$ & 0.7 \\
        Advantage temperature $\beta$ & 0.3 \\
        Maximum advantage weight & 20.0 \\
        Target soft-update rate & 0.005 \\
        Policy learning rate & $1\!\times\!10^{-6}$ \\
        Value learning rate & $1\!\times\!10^{-4}$ \\
        Critic learning rate & $1\!\times\!10^{-4}$ \\
        Adam $(\beta_1,\beta_2,\epsilon)$ & $(0.9,\,0.95,\,10^{-8})$ \\
        Policy fine-tuning strategy & Full-model fine-tuning \\

        \addlinespace[2pt]
        \multicolumn{2}{@{}l}{\textit{eVTA$_0$ configuration}} \\
        Backbone & Qwen3-VL-4B-Instruct \\
        Training epochs & 30  \\
        Context window & 5  \\
        Fine-tuning strategy & LoRA \\
        LoRA rank & 16 \\
        Learning rate & $1\!\times\!10^{-5}$ \\
        Weight decay & 0.01 \\
        Gradient clipping & 0.5 \\
        Batch size & 32 \\
        Gradient accumulation steps & 2 \\
        Target-update interval & 16 \\
        Target soft-update rate & 0.1 \\
        \bottomrule
    \end{tabular}
\end{table}

\noindent\textbf{Policy training.}
We use $\pi_{0.5}$ as the base policy. For the \textit{Pick Carrot} task, we collect 20 expert demonstrations for SFT, while for the \textit{Put Shuttlecock} task, we collect 60 expert demonstrations. All RL experiments start from the corresponding SFT policy, ensuring that different reward learning methods share the same initial policy.
For eVTA$_0$-RLER, we adopt a closed-loop reward evolution procedure. In the first round, we train eVTA$_0$ using 50 rollouts collected from the SFT policy, and then optimize the policy with IQL based on the trained eVTA$_0$. In the second round, we collect another 50 rollouts from the Round 1 policy, use them to further update eVTA$_0$, and perform another round of policy optimization.
For the fixed eVTA$_0$ baseline, we aggregate the 100 rollouts collected from the SFT policy and the Round 1 policy, train eVTA$_0$ once on this fixed dataset, and then optimize the policy with IQL. 
Notably, the fixed eVTA$_0$ baseline and eVTA$_0$-RLER use the same total amount of rollout data and matched optimization budgets for both reward model training and policy optimization.
Therefore, the key difference between the two settings lies in the training schedule: RLER periodically adapts eVTA$_0$ with newly collected rollouts as the policy evolves, whereas the fixed baseline trains eVTA$_0$ on the aggregated rollout dataset without intermediate reward model adaptation.
This comparison directly evaluates the effectiveness of closed-loop reward evolution in eVTA$_0$-RLER. The detailed configurations are summarized in Table~\ref{tab:real-world-configurations}.

\section{Supplementary Results}
\label{appendix:additional-results}

\subsection{Ablation Study of eVTA$_0$}
\label{appendix:ablation-study}

We further study two key hyperparameters of eVTA$_0$ on LIBERO-Spatial: the context window size $w$ and the target-network update interval $K$. Figure~\ref{fig:evta-temporal-ablation} summarizes their effects on reward quality.
As shown in Figure~\ref{fig:evta-temporal-ablation} (top), increasing $w$ from 1 to 5 substantially improves VOC, VROC, and Kendall's $\tau_a$, with $w=5$ achieving the best overall trade-off across all metrics. However, further enlarging the context window leads to a clear degradation in VROC and Kendall's $\tau_a$, together with an overall increase in MSE, even though VOC generally continues to improve. This suggests that longer histories facilitate the recognition of overall task progression, but excessively long contexts may   retain earlier positive progress even after the current behavior starts to deviate. 
Figure~\ref{fig:evta-temporal-ablation} (bottom) further reports the effect of the target-network update interval $K$. Reward quality shows an overall degradation as $K$ increases. In particular, $K=16$ achieves the best overall performance, whereas larger update intervals generally lead to lower VOC and VROC and higher MSE. Since eVTA$_0$ relies on bootstrapped targets, infrequent target-network updates can make the supervision increasingly stale and slow the propagation of updated success estimates.
Based on these results, we set $w=5$ and $K=16$ as the default configuration of eVTA$_0$ and use them consistently throughout all experiments, as they provide the best overall reward quality.

\begin{figure}[t]
    \centering
    \includegraphics[width=\textwidth]{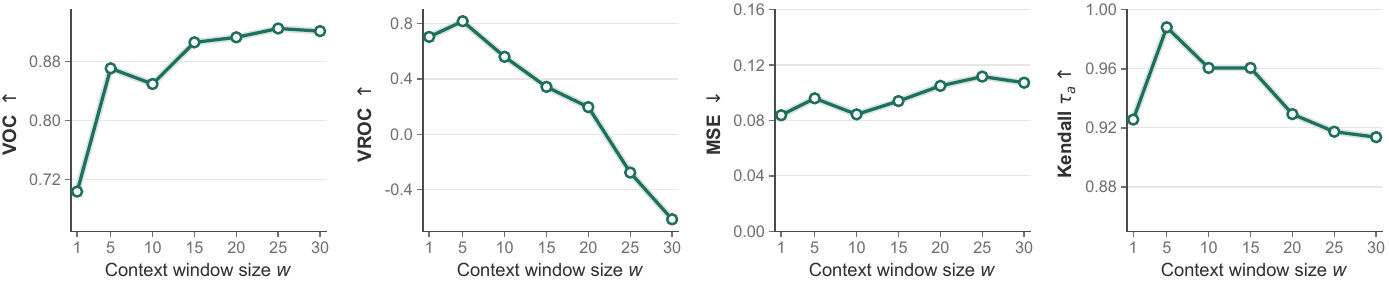}
    \vspace{2mm}
    \includegraphics[width=\textwidth]{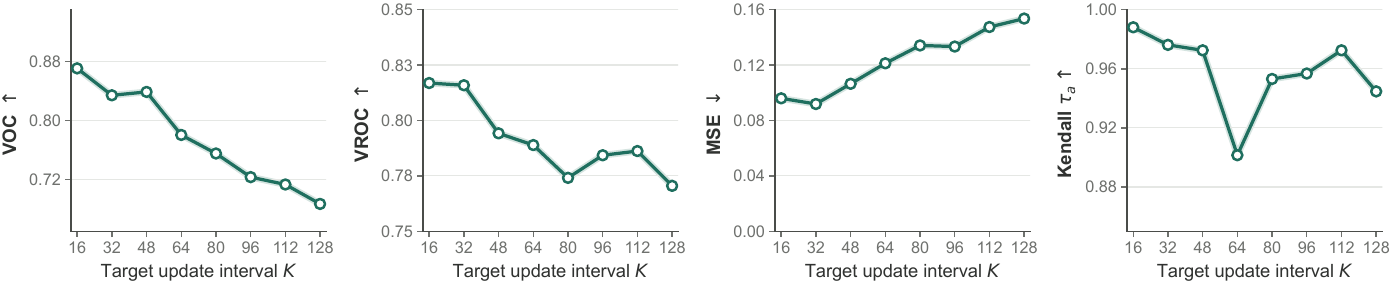}
    \vspace{-0.8cm}
    \caption{Ablation of hyperparameters in eVTA$_0$. The top panel reports the  effect of the context window size $w$, while the bottom part shows the effect of the target-network update interval $K$.}
    \label{fig:evta-temporal-ablation}
\end{figure}

\subsection{Qualitative Analysis of Temporal Reward Dynamics}
\label{appendix:reward-quality-qualitative}

To better understand how eVTA$_0$ provides reward feedback within a trajectory, we visualize its predictions on more successful and failed rollouts in Figures~\ref{fig:additional-success-reward-trajectories} and~\ref{fig:additional-failure-reward-trajectories}. Across both successful and failed trajectories, eVTA$_0$ exhibits non-uniform and behavior-responsive reward dynamics. Its prediction can remain relatively stable during phases with limited impact on task success, change sharply around transitions that substantially improve the likelihood of success, and decrease when the behavior deviates from successful completion.

A representative example is the third successful trajectory in Figure~\ref{fig:additional-success-reward-trajectories}, where the robot grasps and lifts the black bowl by $t_2$, transports it toward the plate from $t_3$ to $t_4$, and completes the placement by $t_5$. The prediction of eVTA$_0$ remains relatively low after the grasp and rises sharply only when later states substantially increase the likelihood of successful completion. This is because a successful grasp alone does not guarantee eventual task success and failures may still occur during subsequent transport or placement (see Figure~\ref{fig:additional-failure-reward-trajectories}). In contrast, Robometer's predictions increase more steadily with task progress. This reflects a fundamental difference in what the two approaches model: progress-based reward models estimate how far the task has advanced, whereas eVTA$_0$ estimates the likelihood of eventual success conditioned on the current history. Consequently, eVTA$_0$ exhibits more non-uniform temporal reward dynamics, with larger changes concentrated around transitions that more strongly affect eventual success.

The third failed trajectory in Figure~\ref{fig:additional-failure-reward-trajectories} provides a complementary example. The robot brings the bowl close to successful placement at $t_2$, subsequently moves away from the target region at $t_3$, partially recovers at $t_4$, and ultimately fails to place the bowl inside the drawer at $t_5$. The prediction of eVTA$_0$ closely reflects this non-monotonic evolution: it increases as the robot approaches success, decreases after the deviation, rises again during recovery, and finally drops as the likelihood of successful completion decreases. Such adaptive adjustment is particularly important for mixed-quality RL rollouts, where intermediate progress does not necessarily evolve monotonically toward success, as the policy may repeatedly approach and move away from successful states while attempting to complete the task. In contrast, Robometer maintains high predictions even after the trajectory deviates from successful completion, revealing a mismatch between its reward estimates and the actual evolution of the task.

Overall, these results show that eVTA$_0$ produces non-uniform and behavior-responsive reward dynamics that reflect changes in the likelihood of task success rather than simply tracking task progress. This property is particularly useful for RL, where policy rollouts may repeatedly approach and move away from successful completion. By assigning larger reward changes to transitions that substantially affect eventual success, eVTA$_0$ provides more informative temporal credit assignment for distinguishing beneficial behaviors, harmful deviations, and subsequent recoveries. 
Interestingly, this behavior-responsive pattern also qualitatively resembles biological reward learning, where dopamine-related teaching signals can increase or decrease as new behaviors change the expected future outcomes~\citep{kasdin2025natural}.
This provides an intuitive perspective on why non-uniform reward feedback can facilitate policy learning.

\begin{figure}[t]
    \centering
    \includegraphics[width=0.99\textwidth]{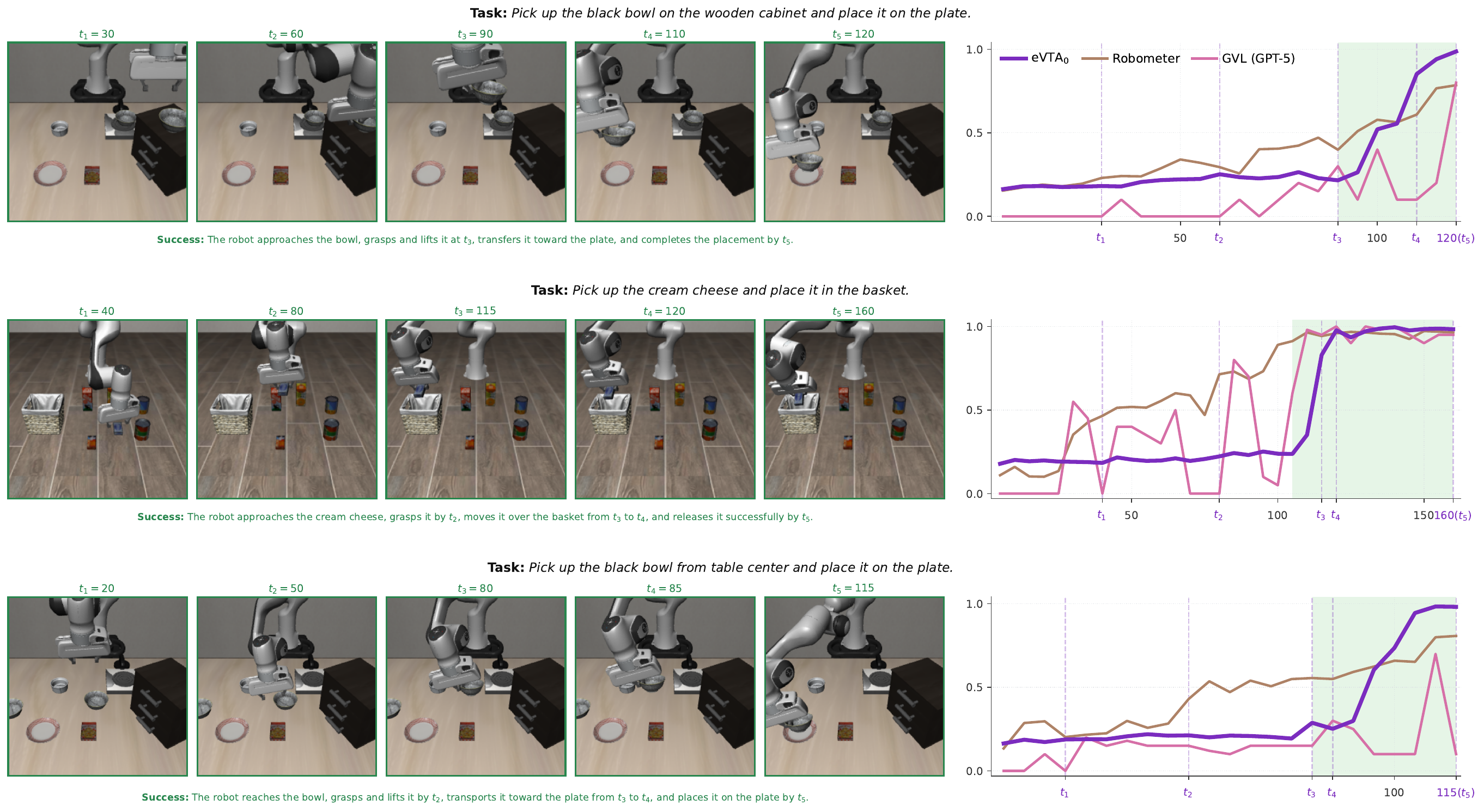}
    \vspace{-0.3cm}
    \caption{Additional qualitative examples of reward predictions on successful trajectories.}
    \label{fig:additional-success-reward-trajectories}
    \vspace{-0.2cm}
\end{figure}

\begin{figure}[t]
    \centering
    \includegraphics[width=0.99\textwidth]{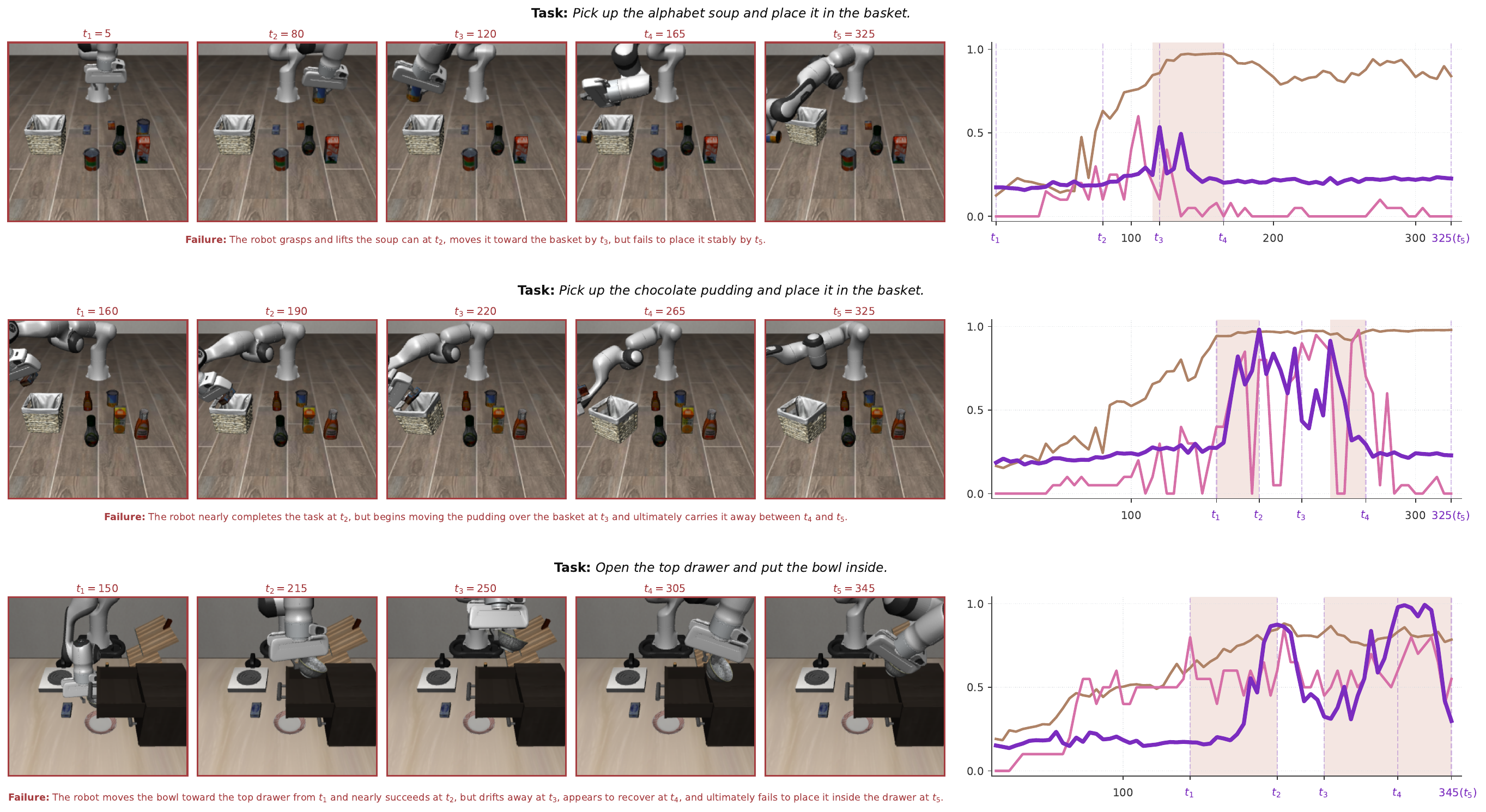}
    \vspace{-0.3cm}
    \caption{Additional qualitative examples of reward predictions on failed trajectories.}
    \label{fig:additional-failure-reward-trajectories}
    \vspace{-0.2cm}
\end{figure}

 \subsection{Detailed Real-World Experiment Results}
 \label{appendix:real-world-experiment-results}

Tables~\ref{tab:real-world-carrot-results} and~\ref{tab:real-world-shuttlecock-results} report the detailed real-world evaluation results. Across both tasks, eVTA$_0$-RLER achieves the best overall performance after Round~2, with more pronounced improvements under OOD conditions. On \textit{Pick Carrot}, the overall success rate increases from 32/50 with SFT to 42/50, while the OOD success rate improves from 8/25 to 17/25. On \textit{Put Shuttlecock}, the corresponding results improve from 30/50 to 43/50 overall and from 9/20 to 16/20 under OOD conditions. eVTA$_0$-RLER Round~2 also consistently outperforms the fixed eVTA$_0$ baseline on both tasks, further demonstrating the benefit of evolving reward feedback during policy optimization. Notably, eVTA$_0$-RLER enables successful task completion in OOD settings where the SFT policy fails across all evaluation trials, improving from 0/4 to 2/4 on OOD-Rotation~3 for \textit{Pick Carrot} and from 0/10 to 7/10 on OOD-Position for \textit{Put Shuttlecock}. These results show that eVTA$_0$-RLER not only improves overall policy performance, but also enables successful task execution under distribution shifts that the initial SFT policy may fail to handle.

Figures~\ref{fig:additional-real-world-carrot} and~\ref{fig:additional-real-world-shuttlecock} provide the real-world rollouts illustrating  the policy behaviors under OOD conditions. For \textit{Pick Carrot}, the OOD-Distance setting places the carrot near the outer range of the robot's workspace. The SFT policy fails to reach sufficiently far to make a stable grasp, whereas the eVTA$_0$-RLER policy reaches out for the target, grasps the carrot by its leaves, and subsequently completes the transfer and placement. For \textit{Put Shuttlecock}, a similar pattern is observed under OOD-Position setting. When the shuttlecock is placed farther from its ID position, the SFT policy fails to reach and grasp the object, while the eVTA$_0$-RLER policy adjusts its reaching motion to the shifted target position, successfully grasps the shuttlecock, and completes the placement.

\begin{table}[t]
    \centering
    \footnotesize
    \setlength{\tabcolsep}{3.5pt}
    \renewcommand{\arraystretch}{1.08}
    \caption{Detailed real-world results for \textit{``pick up the carrot on the
    moving conveyor and place it on the plate''}. Each entry reports the number
    of successful trials over the total number of trials.}
    \label{tab:real-world-carrot-results}
    \resizebox{\textwidth}{!}{%
    \begin{tabular}{@{}l*{8}{c}@{}}
        \toprule
        \textbf{Policy}
        & \textbf{ID}
        & \textbf{\shortstack{OOD\\Distance}}
        & \textbf{\shortstack{OOD\\Rotation 1}}
        & \textbf{\shortstack{OOD\\Rotation 2}}
        & \textbf{\shortstack{OOD\\Rotation 3}}
        & \textbf{\shortstack{OOD\\Rotation 4}}
        & \textbf{\shortstack{OOD\\Rotation 5}}
        & \textbf{Average} \\
        \midrule
        SFT
        & 24/25 & 1/5 & 3/4 & 1/4 & 0/4 & 2/4 & 1/4 & 32/50 \\
        eVTA$_0$-RLER (Round 1)
        & 25/25 & 0/5 & 4/4 & 0/4 & 3/4 & 3/4 & 0/4 & 35/50 \\
        eVTA$_0$-RLER (Round 2)
        & 25/25 & 5/5 & 4/4 & 2/4 & 2/4 & 3/4 & 1/4 & 42/50 \\
        Fixed eVTA$_0$
        & 24/25 & 1/5 & 4/4 & 1/4 & 2/4 & 2/4 & 2/4 & 36/50 \\
        \bottomrule
    \end{tabular}%
    }
\end{table}

\begin{table}[t]
    \centering
    \fontsize{8.2pt}{9.6pt}\selectfont
    \renewcommand{\arraystretch}{1.08}
    \caption{Detailed real-world results for \textit{``put the shuttlecock from the
    table onto the shuttlecock in the plate''}. Each entry reports the number
    of successful trials over the total number of trials.}
    \label{tab:real-world-shuttlecock-results}
    \begin{tabular*}{\textwidth}{@{\extracolsep{\fill}}lcccc@{}}
        \toprule
        \textbf{Policy}
        & \textbf{ID}
        & \textbf{OOD-Color}
        & \textbf{OOD-Position}
        & \textbf{Average} \\
        \midrule
        SFT
        & 21/30 & 9/10 & 0/10 & 30/50 \\
        eVTA$_0$-RLER (Round 1)
        & 26/30 & 9/10 & 3/10 & 38/50 \\
        eVTA$_0$-RLER (Round 2)
        & 27/30 & 9/10 & 7/10 & 43/50 \\
        Fixed eVTA$_0$
        & 26/30 & 9/10 & 4/10 & 39/50 \\
        \bottomrule
    \end{tabular*}
\end{table}


\begin{figure}[t]
    \centering
    \includegraphics[width=\textwidth]{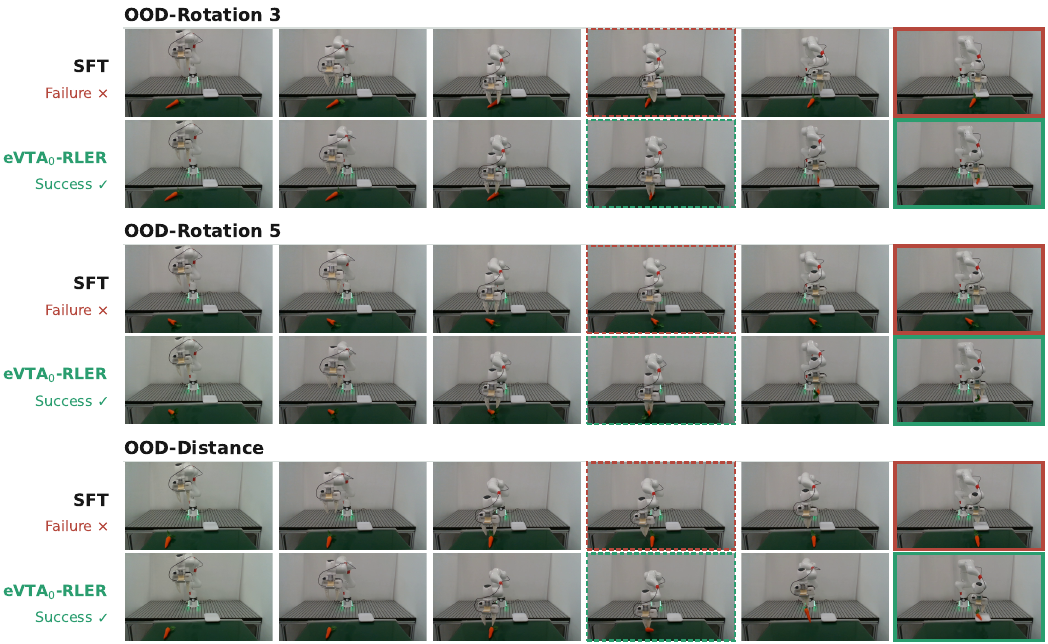}
    \vspace{-0.5cm}
    \caption{Real-world rollouts under OOD conditions for \textit{``pick up the carrot on the
    moving conveyor and place it on the plate''.}}
    \label{fig:additional-real-world-carrot}
\end{figure}

\begin{figure}[t]
    \centering
    \includegraphics[width=\textwidth]{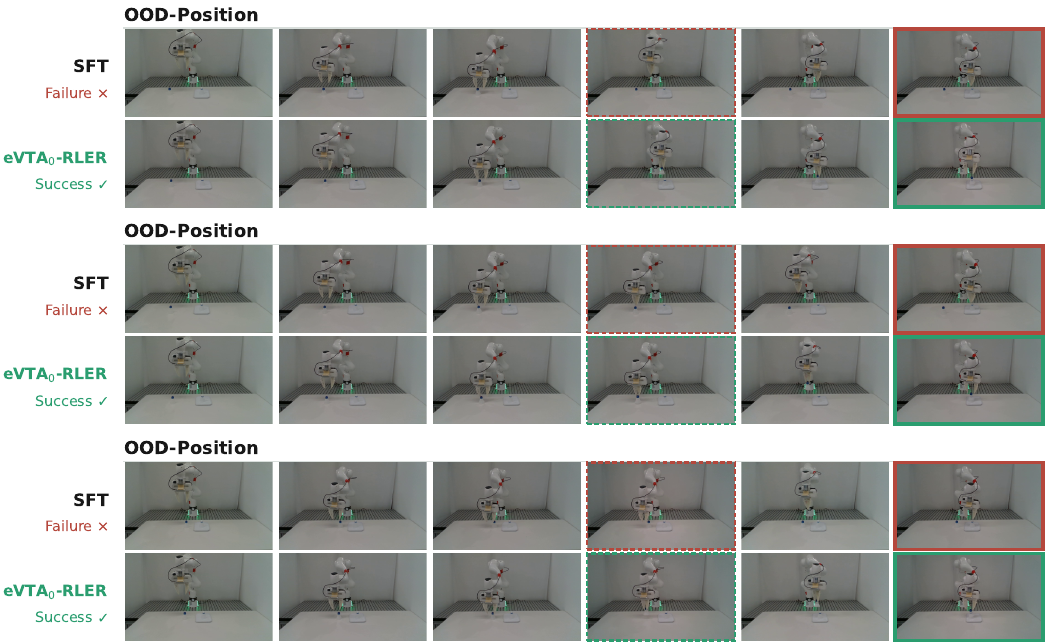}
    \vspace{-0.5cm}
    \caption{Real-world  rollouts under  OOD     conditions for \textit{``put the shuttlecock from the
    table onto the shuttlecock in the plate''}.}
    \label{fig:additional-real-world-shuttlecock}
\end{figure}


\section{Discussion}
\label{appendix:discussion}
\subsection{Relationship between eVTA$_0$ and Value Functions}
\label{appendix:evta-vs-critic}

eVTA$_0$ is mathematically related to a value function under specific settings. Recall from Eq.~\ref{eq:success-probability-reward} that the success-probability reward is defined as
\[
r_t
=
\mathbb{P}_\pi(S=1\mid\mathcal{F}_t)
=
\mathbb{E}_\pi[S\mid\mathcal{F}_t].
\]
When the task reward consists only of a terminal success signal and the discount factor is set to $\gamma=1$, the expected return conditioned on the trajectory history becomes
\[
V^\pi(\mathcal{F}_t)
=
\mathbb{E}_\pi[S\mid\mathcal{F}_t]
=
\mathbb{P}_\pi(S=1\mid\mathcal{F}_t),
\]
which is equivalent to the success-probability reward approximated by eVTA$_0$.

Despite this connection, eVTA$_0$ serves a different role from the critic used for value estimation in standard RL algorithms. The key distinction lies in what they provide to policy optimization. A critic estimates the return induced by a given reward function and is used internally for value or advantage estimation. In contrast, eVTA$_0$ learns dense success-probability feedback from policy rollouts and task outcomes, which is transformed by Eq.~\ref{eq:reward-transformation} into the reward signal used for  RL. The RL algorithm may still maintain its own critic to estimate the return induced by this reward. Therefore, eVTA$_0$ complements rather than replaces the critic. Furthermore, they  differ in how they are coupled with policy optimization. A critic is typically updated as an internal component of the RL algorithm, whereas eVTA$_0$ is maintained as a separate reward model. It can remain fixed throughout policy optimization or be periodically adapted with newly collected rollouts under RLER.

\begin{figure}[t]
    \centering
    \includegraphics[width=0.99\textwidth]{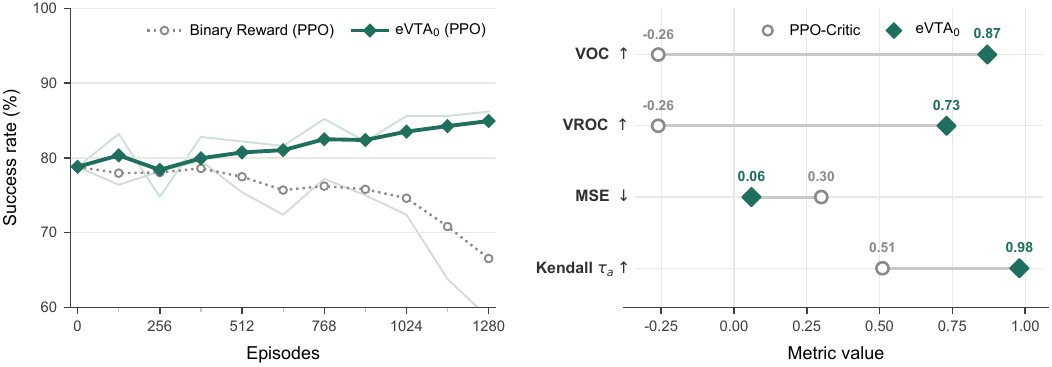}
    \vspace{-0.3cm}
    \caption{
Comparison of implicit value estimation and explicit success-probability reward learning. 
The left panel illustrates policy learning curves of PPO with sparse binary rewards and eVTA$_0$ rewards. The right panel depicts the estimation quality of the learned signals from the PPO critic and eVTA$_0$. 
}
    \label{fig:critic_comparison}
\end{figure}

\noindent\textbf{Empirical comparison with RL critics.}
Given the theoretical connection above, we further compare eVTA$_0$ with the value estimation performed by standard RL critics. We choose PPO with binary terminal rewards and $\gamma=1$ as the baseline, since under this setting the learned value function has the same target quantity as the success-probability formulation of eVTA$_0$. Specifically, we conduct experiments on LIBERO-Spatial and train $\pi_{0.5}$ with PPO for 1,280 interaction episodes. Detailed  training configurations are provided in Table~\ref{tab:critic_comparison}.

\begin{table}[t]
    \centering
    \footnotesize
    \setlength{\tabcolsep}{5pt}
    \renewcommand{\arraystretch}{1.02}
    \caption{PPO training configurations for the critic comparison experiment on LIBERO-Spatial in Appendix~\ref{appendix:evta-vs-critic}. The binary reward and eVTA$_0$ reward use the same settings.}
    \label{tab:critic_comparison}
    \begin{tabular}{@{}p{0.43\textwidth}p{0.51\textwidth}@{}}
        \toprule
        \textbf{Parameter} & \textbf{Configuration} \\
        \midrule
        Training budget & 10 epochs with 1,280 interaction episodes \\
        Parallel environments & 16 \\
        Rollouts per epoch & 8 \\
        Episodes per epoch & 128 \\
        Maximum episode length & 240 steps \\
        Batch size & 4 \\
        Gradient accumulation steps & 16 \\
        Policy learning rate & $5\!\times\!10^{-6}$ \\
        Value learning rate & $1\!\times\!10^{-4}$ \\
        Adam $(\beta_1,\beta_2,\epsilon)$ & $(0.9,\,0.95,\,10^{-8})$ \\
        Weight decay & 0.01 \\
        Gradient clipping & 1.0 \\
        Discount factor $\gamma$ & 1.0 \\
        GAE $\lambda$ & 0.95 \\
        PPO policy clipping & 0.2 \\
        PPO value clipping & 0.2 \\
        Update epochs & 1 \\
        Train action expert only & Enabled \\
        Random seed & 42 \\
        \bottomrule
    \end{tabular}
\end{table}

As shown in Figure~\ref{fig:critic_comparison} (left), PPO with binary rewards struggles to improve and eventually degrades under sparse binary feedback, whereas replacing the sparse reward with eVTA$_0$ enables stable and continuous policy improvement throughout training. This suggests that explicitly learning success-probability feedback provides more effective credit assignment than relying solely on the implicit value estimation of the PPO critic.
We further evaluate the estimation quality of the learned signals from the PPO critic and eVTA$_0$. As shown in Figure~\ref{fig:critic_comparison} (right), eVTA$_0$ consistently achieves better performance in VOC, VROC, MSE, and Kendall $\tau_a$. These results demonstrate that although the PPO critic and eVTA$_0$ are theoretically related under sparse terminal rewards, explicitly learning a standalone success-probability reward model provides a more accurate and reliable estimation of task success, thereby offering a stronger learning signal for downstream policy learning.


\begin{table}[t]
    \centering
    \small
    \renewcommand{\arraystretch}{1.08}
    \caption{Reward-quality comparison on LIBERO  between temporal-difference (TD) bootstrapping and Monte Carlo (MC) regression. The best result is shown in bold.}
    \label{tab:td-vs-mc}
    \begin{tabular*}{\textwidth}{@{\extracolsep{\fill}}llcccc@{}}
        \toprule
        \textbf{Task suite} & \textbf{Method} & \textbf{VOC} $\uparrow$ & \textbf{VROC} $\uparrow$ & \textbf{MSE} $\downarrow$ & \textbf{Kendall's $\tau_a$} $\uparrow$ \\
        \midrule
        \multirow{2}{*}{LIBERO-Spatial}
            & TD & \textbf{0.87} & \textbf{0.73} & 0.06 & \textbf{0.98} \\
            & MC & 0.24 & 0.17 & \textbf{0.05} & 0.96 \\
        \addlinespace[2pt]
        \multirow{2}{*}{LIBERO-Object}
            & TD & \textbf{0.90} & \textbf{0.88} & 0.03 & \textbf{1.00} \\
            & MC & 0.35 & 0.51 & \textbf{0.03} & 0.98 \\
        \addlinespace[2pt]
        \multirow{2}{*}{LIBERO-Goal}
            & TD & \textbf{0.84} & \textbf{0.74} & 0.07 & \textbf{1.00} \\
            & MC & 0.35 & 0.32 & \textbf{0.04} & 0.97 \\
        \addlinespace[2pt]
        \multirow{2}{*}{LIBERO-Long}
            & TD & \textbf{0.77} & \textbf{0.73} & 0.08 & \textbf{0.93} \\
            & MC & 0.53 & 0.48 & \textbf{0.05} & 0.80 \\
        \midrule
        \multirow{2}{*}{\textbf{Average}}
            & TD & \textbf{0.84} & \textbf{0.77} & 0.06 & \textbf{0.98} \\
            & MC & 0.37 & 0.37 & \textbf{0.04} & 0.93 \\
        \bottomrule
    \end{tabular*}
\end{table}

\subsection{TD Bootstrapping versus Monte Carlo Regression}
\label{app:td-vs-mc}
\noindent\textbf{Theoretical analysis.}
One natural alternative to the TD-style bootstrapping adopted in eVTA$_0$ is Monte Carlo (MC) regression. For each intermediate timestep $t<T$, recall that the success-probability reward is defined as:
\begin{equation*}
    r_t
    =
    \mathbb{E}_{\pi}[S\mid\mathcal{F}_t]
    =
    \mathbb{P}_{\pi}(S=1\mid\mathcal{F}_t),
\end{equation*}
where $S \in \{0,1\}$ denotes the terminal task outcome. Under MC regression, the same terminal outcome is directly used as the supervision target for every intermediate trajectory prefix:
\begin{equation*}
y_t^{\mathrm{MC}} := S.
\end{equation*}
This is a valid estimator of the success-probability target, since:
\begin{equation*}
    \mathbb{E}_{\pi}
    \left[
    y_t^{\mathrm{MC}}
    \mid\mathcal{F}_t
    \right]
    =
    \mathbb{E}_{\pi}
    \left[
    S\mid\mathcal{F}_t
    \right]
    =
    r_t.
\end{equation*}
Recall  the ideal one-step TD target:
\begin{equation*}
    y_t^{\mathrm{TD}}
:=
r_{t+1}
=
\mathbb{E}_{\pi}
\left[
S\mid\mathcal{F}_{t+1}
\right].
\end{equation*}
By the temporal consistency property in Theorem~1, the ideal TD target also has the same conditional expectation as the MC target:
\begin{equation*}
    \mathbb{E}_{\pi}
    \left[
    y_t^{\mathrm{TD}}
    \mid\mathcal{F}_t
    \right]
    =
    \mathbb{E}_{\pi}
    \left[
    r_{t+1}\mid\mathcal{F}_t
    \right]
    =
    r_t.
\end{equation*}
Therefore, MC regression and  TD bootstrapping share the same population target. The main difference between the two approaches  lies in the variance of the supervision signal.

Under MC regression, each trajectory prefix is supervised by a single binary realization of the underlying success probability. Specifically, all prefixes from a successful trajectory receive a target of one, whereas all prefixes from a failed trajectory receive a target of zero. Since $S$ is binary, the conditional variance of the MC target is:
\begin{equation*}
\mathrm{Var}_{\pi}
\left(
y_t^{\mathrm{MC}}
\mid\mathcal{F}_t
\right)
=
\mathrm{Var}_{\pi}
\left(
S\mid\mathcal{F}_t
\right)
=
r_t(1-r_t).
\end{equation*}
Moreover, applying the conditional law of total variance to $S$ gives:
\begin{equation*}
    \mathrm{Var}_{\pi}
    \left(
    S\mid\mathcal{F}_t
    \right)
    =
    \mathbb{E}_{\pi}
    \left[
    \mathrm{Var}_{\pi}
    \left(
    S\mid\mathcal{F}_{t+1}
    \right)
    \mid\mathcal{F}_t
    \right]
    +
    \mathrm{Var}_{\pi}
    \left(
    \mathbb{E}_{\pi}
    \left[
    S\mid\mathcal{F}_{t+1}
    \right]
    \mid\mathcal{F}_t
    \right).
\end{equation*}
Using $r_{t+1}=\mathbb{E}_{\pi}[S\mid\mathcal{F}_{t+1}]$, this becomes:
\begin{equation*}
    \mathrm{Var}_{\pi}
    \left(
    S\mid\mathcal{F}_t
    \right)
    =
    \mathbb{E}_{\pi}
    \left[
    \mathrm{Var}_{\pi}
    \left(
    S\mid\mathcal{F}_{t+1}
    \right)
    \mid\mathcal{F}_t
    \right]
    +
    \mathrm{Var}_{\pi}
    \left(
    r_{t+1}\mid\mathcal{F}_t
    \right).    
\end{equation*}
Since $\mathbb{E}_{\pi}\left[\mathrm{Var}_{\pi}\left(S\mid\mathcal{F}_{t+1}\right)\mid\mathcal{F}_t\right]$ is non-negative, we have:
\begin{equation*}
    \mathrm{Var}_{\pi}
    \left(
    y_t^{\mathrm{TD}}
    \mid\mathcal{F}_t
    \right)
    =
    \mathrm{Var}_{\pi}
    \left(
    r_{t+1}\mid\mathcal{F}_t
    \right)
    \leq
    \mathrm{Var}_{\pi}
    \left(
    S\mid\mathcal{F}_t
    \right)
    =
    \mathrm{Var}_{\pi}
    \left(
    y_t^{\mathrm{MC}}
    \mid\mathcal{F}_t
    \right).    
\end{equation*}
Thus, the  TD target has lower conditional variance than the MC target. The reduction in conditional target variance is:
\begin{equation*}
    \mathrm{Var}_{\pi}(y_t^{\mathrm{MC}}\mid\mathcal{F}_t)
    -
    \mathrm{Var}_{\pi}(y_t^{\mathrm{TD}}\mid\mathcal{F}_t)
    =
    \mathbb{E}_{\pi}
    \left[
    \mathrm{Var}_{\pi}(S\mid\mathcal{F}_{t+1})
    \mid\mathcal{F}_t
    \right]
    \geq 0.
\end{equation*}


\noindent\textbf{Takeaway.} 
The above analysis  shows that MC regression and ideal TD bootstrapping share the same population success-probability target, while the latter provides lower-variance supervision at intermediate timesteps. 
Intuitively, under MC regression, since each  intermediate state directly inherits the final binary outcome, a state that is close to success can still receive a target of zero if the trajectory eventually fails, and vice versa. Hence, MC regression can be noisy for learning intermediate success probabilities from limited rollout data.  
In contrast, TD bootstrapping uses the next-step success probability as a soft target for the current state, providing less noisy  and more temporally informative supervision than directly assigning the final binary outcome to every intermediate state.
This motivates our use of TD-style bootstrapping for learning eVTA$_0$ from policy rollouts.

\noindent\textbf{Empirical comparison.}
To empirically validate the advantage of TD bootstrapping, we further compare it with MC regression for training eVTA$_0$ on LIBERO. As shown in Table~\ref{tab:td-vs-mc}, TD bootstrapping achieves substantially better temporal consistency, as reflected by higher VOC and VROC scores. Meanwhile, it achieves comparable outcome distinction performance with MC regression. These results indicate that using intermediate bootstrapped success probabilities provides more informative supervision than directly assigning terminal binary outcomes to intermediate steps.

Figure~\ref{fig:td-vs-mc-qualitative} further visualizes reward predictions of the two variants. For the successful trajectory, MC regression assigns consistently high rewards throughout the trajectory, providing limited distinction between early uncertain states and states closer to task completion. In contrast, TD-eVTA$_0$ maintains a lower reward during early stages and increases sharply when the robot successfully grasps and places the object. For the failed trajectory, MC-eVTA$_0$ produces relatively high predictions despite the robot failing to make a successful grasp, whereas TD-eVTA$_0$ maintains low predictions throughout the trajectory. These  results  demonstrate that TD bootstrapping provides more informative intermediate success-probability feedback and better tracks task evolution.


\begin{figure}[t]
    \centering
    \includegraphics[width=\textwidth]{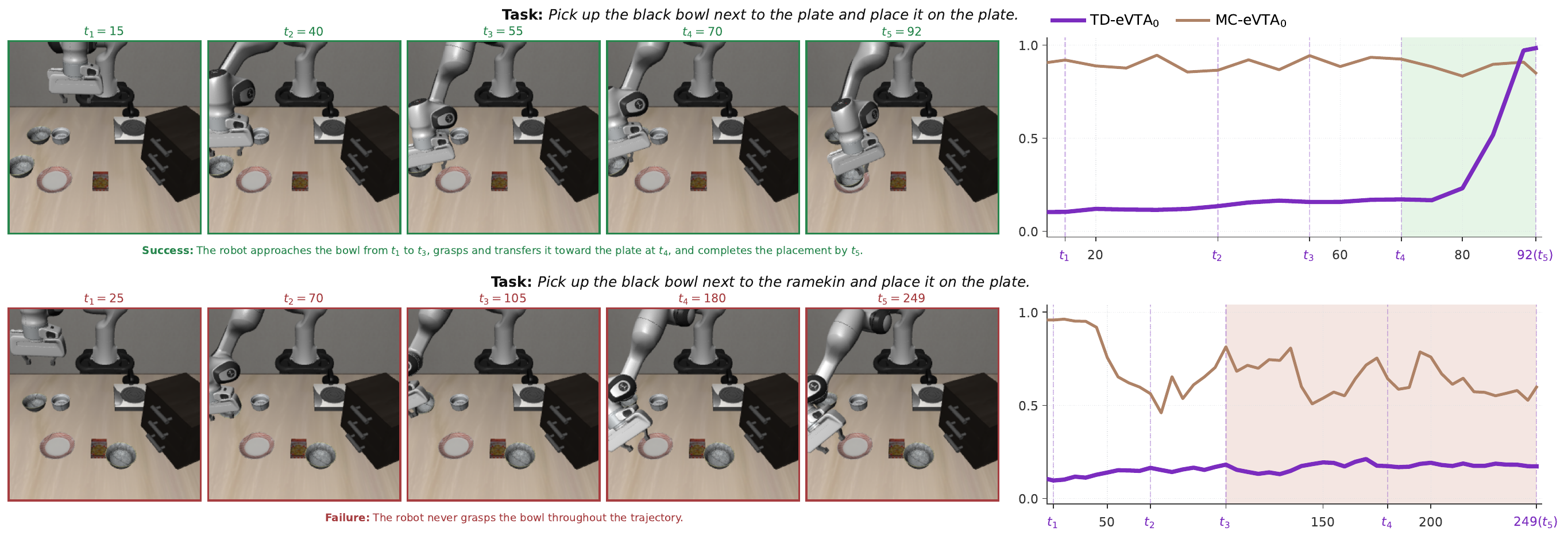}
    \vspace{-0.6cm}
    \caption{Qualitative comparison of eVTA$_0$ trained by TD bootstrapping and MC regression  on a successful trajectory (top) and a failed trajectory (bottom). }
    \label{fig:td-vs-mc-qualitative}
\end{figure}

\stopcontents[appendices]

\end{document}